\documentclass[11pt]{article}

\usepackage[margin=1in]{geometry}
\usepackage[authoryear,round]{natbib}
\setcitestyle{citesep={;},aysep={,},yysep={;}}
\usepackage{amsmath,amssymb,amsthm,mathtools}
\usepackage{etoolbox}
\usepackage{needspace}
\usepackage{pgfplots}
\pgfplotsset{compat=1.18}
\usepgfplotslibrary{groupplots}
\usepackage{xcolor}
\definecolor{linkblue}{RGB}{38,73,115}
\usepackage[hyperfootnotes=false]{hyperref}
\usepackage{url}
\hypersetup{
  colorlinks=true,
  linkcolor=linkblue,
  citecolor=linkblue,
  urlcolor=linkblue,
  pdftitle={Bayesian Intelligence and Its Implications},
  pdfauthor={Alex Smolin and Bryan Wilder}
}

\newcommand*{\appref}[1]{\hyperref[#1]{Appendix~\ref*{#1}}}

\newtheoremstyle{mainresult}
  {8pt plus 2pt minus 2pt}
  {4pt plus 1pt minus 1pt}
  {\itshape}
  {}
  {\bfseries}
  {.}
  {0.5em}
  {}

\newenvironment{compactproof}
  {\begin{proof}}
  {\end{proof}}

\theoremstyle{mainresult}
\newtheorem{proposition}{Proposition}

\newtheorem{corollary}{Corollary}

\newtheorem{lemma}{Lemma}

\theoremstyle{definition}

\newtheorem{assumption}{Assumption}

\newtheorem{example}{Example}

\newtheorem*{examplecontinued}{\autoref*{ex:running_example}}
\newcommand{\exampleqed}{%
  \begingroup
  \renewcommand{\qedsymbol}{\ensuremath{\blacklozenge}}%
  \qed
  \endgroup
}
\AtBeginEnvironment{example}{\pushQED{\exampleqed}}
\AtEndEnvironment{example}{\popQED}
\AtBeginEnvironment{examplecontinued}{\pushQED{\exampleqed}}
\AtEndEnvironment{examplecontinued}{\popQED}
\theoremstyle{definition}
\newtheorem{remark}{Remark}
\newcommand{\remarkqed}{%
  \begingroup
  \renewcommand{\qedsymbol}{\ensuremath{\lozenge}}%
  \qed
  \endgroup
}
\AtBeginEnvironment{remark}{\pushQED{\remarkqed}}
\AtEndEnvironment{remark}{\popQED}
\theoremstyle{remark}

\theoremstyle{mainresult}
\newtheorem{theorem}{Theorem}
\newcounter{blackwellcondition}
\renewcommand{\theblackwellcondition}{\roman{blackwellcondition}}

\DeclareMathOperator{\supp}{supp}

\newcommand{\EE}{\mathbb E}

\newcommand{\RR}{\mathbb R}
\newcommand{\reals}{\mathbb R}

\newcommand{\Del}{\Delta}

\newcommand{\Y}{\mathcal{Y}}

\title{Bayesian Intelligence from the Outside}
\author{
	Alex Smolin\\
    Department of Economics\\
Toulouse School of Economics\\
alex.smolin@tse-fr.eu
	\and
Bryan Wilder\\
	Machine Learning Department\\ Carnegie Mellon University\\
	bwilder@cmu.edu
 }

\date{}
\usepackage{titling}
\pretitle{\begin{center}\LARGE\bfseries}
\posttitle{\par\end{center}\vskip 0.25em}
\predate{}
\postdate{}

\begin{document}
\maketitle

\begin{abstract}
Inferring intelligence from observable behavior is a foundational challenge in artificial intelligence. We develop a theory of Bayesian intelligence for agents such as language models. Each prompt induces a possibly imperfect internal experiment; the agent updates a full-support prior by Bayes' rule and faithfully reports its posterior over the possible answers to the question. Repetitions draw fresh, independent outcomes from the same unobserved experiment at one fixed state. We show that the agent's behavior admits this explanation if and only if its reports are not fully contradictory, i.e., some state remains possible under every report across all prompts. Report frequencies and the sizes of positive probabilities impose no further restrictions. We further propose and characterize the behavioral implications of an intelligence order that makes the behavior of two agents consistent with one agent having access to a more informative experiment: there should exist a coupling of report distributions such that the more informative agent's report excludes every answer excluded by its counterpart. Finally, we show the difficulty of aggregating coarse reports from intelligent agents: unless the agent reports a belief about the complete state of the world, the optimal aggregation can assign arbitrary weights to states that have not been excluded. These results provide a basis for understanding when agents' behavior is intelligent and highlight the difficulty of rejecting Bayesian rationality.
\end{abstract}

\section{Introduction}

As systems based on large language models (LLMs) are integrated deeper into decision-making and economic contexts \citep{DengEtAl2024,DuettingEtAl2024,DuettingEtAl2026,FishEtAl2026}, understanding the mechanism through which LLMs reason and whether this process is internally consistent or rationalizable becomes increasingly important. Internal consistency is a necessary component of valid reasoning and determines whether a system can be modeled from the outside as a rational agent. For example, classic representation theorems provide axiomatic foundations for beliefs as drivers of rational behavior: when behavior is sufficiently consistent, it can be represented by a utility function and subjective probabilities over the relevant uncertainty, such that an agent behaves ``as if'' maximizing expected utility given those beliefs \citep{Savage1954,AnscombeAumann1963}. Bayes' rule \citep{Bayes1763} further prescribes how beliefs should be updated in response to external information. A growing literature tests whether LLMs update in a Bayes-consistent manner to evidence supplied in the prompt \citep{ChenEtAl2026,ZhuGriffiths2024,andrews2026dutchbookslanguagemodels}, or whether they express preferences that are sufficiently internally coherent to be rationalizable \citep{mazeika2026utility}.   

In addition to externally provided evidence, sophisticated behavior from LLMs is now underpinned by substantial inference-time reasoning. Our goal is to develop an analogous framework for testing the internal consistency of this process: we model inference-time computation as supplying ``internal'' information that updates the agent's beliefs. Testing whether the update process is rationalizable in Bayesian terms allows us to separate quality of the information acquired (which may be variable, depending e.g., on the model's domain-specific or factual knowledge) from the correctness of updating.

The challenge is that the information driving LLMs' answers is often opaque: a large body of work questions whether the text of LLM chain of thought provides faithful explanations of the output \citep{lanham2023measuring,chen2025reasoning}, or whether reasoning always plays a causal role how LLMs solve problems \citep{boppana2026reasoning,palod2025performative}. We provide a complementary perspective on such problems by asking how much can be tested within a black-box framework. Formally, we introduce a model based on the classical Blackwell framework of statistical experiments where reasoning reveals \textit{information} to the LLM, which the LLM is then able to condition on in producing its final response. We do not make any assumptions about whether this intermediate information is ``faithful" when viewed as natural language. Instead, we ask just whether the distribution of final reports from the LLM under different prompts or repeated resampling of the reasoning process is internally rationalizable, i.e., Bayes-consistent. Building on the same framework, we also show how to test whether on agent's reasoning reveals strictly more information than another, and how repeated reports from a LLM should be combined.

Our framework separates the correctness of reasoning, as judged by consistency with external ground truth, from its internal consistency by allowing for the possibility experiment can be imperfect, so the agent can place little posterior probability on the truth and give an incorrect answer. The experiment may also depend arbitrarily on the prompt: our formal model allows the possibility that a change in framing can change the reasoning path. The model does require correct specification in two senses: the prior gives the true state positive probability, and the actual signal distribution is the one used in Bayes' rule. Thus, conditional on its signal, the agent updates without error.

Our first result, \autoref{thm:general_full_belief}, characterizes the collections of \emph{Bayes-plausible} report distributions, i.e., those consistent with this model. A collection is Bayes-plausible if and only if some single complete state remains possible under every report for every prompt. In other words, zero-probability claims---``hard exclusions''---must be jointly consistent; conditional report frequencies and the numerical values of positive probabilities are otherwise unrestricted. In particular, any collection of distributions containing only reports that assign positive probability to every answer is rationalizable. No martingale restriction follows from sampling reports at a fixed state. This rationalization establishes compatibility with Bayesian reasoning, without identifying how the agent actually reasons.

We next compare the information acquired by two agents. For a given prompt, one report distribution \emph{plausibly Blackwell dominates} another if both can arise at a common prior and true state, with a more informative experiment behind the first. When each possible answer is compatible with at least two complete states, \autoref{prop:blackwell} shows that this holds exactly when all reports share a possible true answer and the reports can be paired so that each more informative report rules out every answer ruled out by its less informative counterpart, and possibly more. Thus, the observable restrictions on a plausible intelligence order concern coherent hard exclusions. With only interior reports, either order is compatible with the observations, regardless of confidence or variability.

Finally, we ask what repeated reports reveal about their Bayesian aggregate when the reasoning experiment is unknown. By \autoref{prop:bayes_plausible_aggregation}, with at least two distinct reports and at least two complete states per answer, the aggregate must rule out every answer excluded by a report, but can assign any weights to the other answers. If one report rules out an answer and another does not, their arithmetic average violates this restriction. Affine log-odds pooling respects the restriction, and the optimal rule in the Gaussian intelligence model takes this form. When reports give beliefs about the complete state, a prior-corrected sum of their log odds determines the Bayesian aggregate. This requires knowing the prior, but not the experiment. Coarse reports leave out distinctions between states that are needed for this calculation.

\subsection{Related Literature}

\paragraph{Bayesian Reasoning in Language Models.}
One strand of work models in-context learning as implicit Bayesian inference under a specified generative model of pretraining data and prompts \citep{XieEtAl2022}. We ask which report distributions can be explained by Bayesian updating when the experiment induced by each prompt is unknown and unrestricted. The aim is to establish what can be tested from these reports; the model does not explain how pretraining produces Bayesian inference.

Many existing evaluations test several forms of logical or probabilistic consistency. BeliefBank treats categorical answers about propositions as a single, stable set of beliefs \citep{KassnerEtAl2021}. Other studies check probability identities across prompts \citep{ZhuGriffiths2024,ImranEtAl2025} or compare beliefs elicited later with Bayesian updates calculated from elicited priors and likelihoods \citep{ChenEtAl2026}. Martingale tests examine in-context predictions or reasoning histories \citep{FalckWangHolmes2024,HeEtAl2025}, while evidence-based tests vary the informativeness and reliability of supplied evidence \citep{KimKimThorne2025}. These evaluations maintain implicit and explicit assumptions about what responses represent and how information is related across prompts. We derive the implications of Bayesian updating when those information relationships are unrestricted. A categorical answer need not express certainty, and different prompts can induce different reasoning experiments.

Other benchmarks compare probabilistic judgments with reference distributions. CogBench's urn task supplies priors and likelihoods and estimates how models weigh them \citep{CodaFornoEtAl2024}; QUITE tests numerical probabilities and verbal uncertainty given verbalized Bayesian networks and evidence \citep{SchraderEtAl2024}; and \citet{ParuchuriEtAl2024} evaluate percentile estimation, sampling, and probability calculations. These evaluations measure how well models solve particular inference tasks. We ask whether their reports can be explained by Bayesian updating under an unknown internal experiment that may provide imperfect information (see discussion in \autoref{sec:discussion} on a possible combination of these approaches).

\paragraph{Bayesian Learning and Its Empirical Content}
One strand of work studies which distributions of posterior beliefs can arise before the state is known. The splitting lemma of \citet{AumannMaschler1995} and the persuasion formulation of \citet{KamenicaGentzkow2011} show that a finite distribution of posteriors can be induced by a signal if and only if its mean equals the prior. \citet{ArieliBabichenko2024} extend this characterization to a homogeneous population. These results average over both states and signals under the prior-predictive distribution. By contrast, \citet{ArieliEtAl2020Identifiable} characterize forecast distributions conditional on a realized state and ask when they identify that state despite an unknown information structure. Similarly, \citet{DovalSmolin2024} study welfare conditional on each realized type, retaining information hidden by ex ante averaging. We condition on one state, but study the joint feasibility of coarse reports from multiple prompt-dependent experiments, as well as compatible information orders and aggregates. Because our observations do not average across states, \autoref{thm:general_full_belief} imposes much weaker restrictions.

A second strand asks what Bayesian updating implies when the analyst does not fully know the agent's information structure. \citet{ShmayaYariv2016} obtain an ``anything goes'' result: any assignment of posterior beliefs to signal histories can be explained by Bayesian updating when subjects' conjectures about the joint distribution of the state, signals, and number of signals revealed are unrestricted. Subjects can treat the experimenter's choice of how much information to reveal as evidence about the state, beyond the signals' content. Protocols that restrict these conjectures recover testable implications. \citet{Molavi2026Tests} shows that martingale restrictions can also fail when the agent's subjective signal distribution differs from the objective distribution. \citet{Molavi2026Misspecification} further shows that unrestricted misspecification on a finite state space leaves only support restrictions. In our setting, the agent's signal model is correctly specified, but the prompt-dependent internal experiments are unobserved. Repeated reports are sampled at one fixed state and must all leave it possible; unobserved signal probabilities at other states supply the remaining freedom for rationalization.

\paragraph{Behavioral and Experimental Economics} Our theory can also be applied to human intelligence and therefore speaks to behavioral and experimental economics. Like our model, the tracing procedure of \citet{HarsanyiSelten1988} formalizes an internal reasoning process that starts from prior beliefs. It instead adjusts beliefs about others' strategies through iterated best-response reasoning until plans and expectations coincide at an equilibrium; our agent conditions on a prompt-induced signal about the state and may remain uncertain. Rational inattention  endogenizes the information structure: agents choose what and how much to learn under a finite Shannon capacity \citep{Sims2003} or a Shannon information cost that yields generalized multinomial-logit choice \citep{MatejkaMcKay2015}. We instead allow an arbitrary prompt-dependent experiment and ask what Bayesian updating alone implies, without assuming optimal information acquisition or a particular cost. Recently, \citet{CaplinMartinMarx2025} apply cognitive-economic revealed-preference tests to a trained image classifier, varying its training loss and rationalizing its predictions through costly learning. Their object is learning across training environments; ours is posterior formation during inference by a fixed trained system. We note that AI systems are especially useful for this inquiry because an identical system and configuration can be queried repeatedly in a controlled environment. Human subjects differ and remember prior exposure, which confounds repeated measurement and requires more restrictive designs.

\section{Model}
An agent responds to prompts from an observer. We propose a Bayesian theory of how it acquires information, updates its beliefs, and responds.

Formally, we let $\omega\in\Omega$ denote a state of the world, where $\Omega$ is finite and $|\Omega|=N$. The state encompasses \emph{all} uncertainty about the world relevant to the agent's answers, so we expect $N$ to be very large. The agent starts with a full-support \emph{prior}
\begin{align*}
\mu_0\in\operatorname{int}\Del(\Omega).
\end{align*}
This prior is the starting point for the agent's computation and can be interpreted as summarizing its knowledge about the world in the absence of any additional computation.

The observer sends a prompt $p$ from a given vocabulary $P$. Each prompt triggers a reasoning process that leads the agent to revise its belief via computation that includes the analysis of the prompt, the drawing on its internal knowledge, the chain-of-thought procedure \citep{WeiEtAl2022}, and possibly a  retrieval of additional external information via tools. The agent then outputs a report $r$ from a given vocabulary $R$.

We model this process as a Bayesian learning about the state of world followed by a strategic response. First, the agent gets access to a prompt-dependent statistical experiment
\begin{align}
E_p=(S_p,\pi_p),
\qquad
\pi_p:\Omega\to\Del(S_p),
\label{eq:experiment}
\end{align}
with finite $S_p$. The agent observes a signal $s\in S_p$ and, building on its prior, forms the \emph{posterior} by Bayes' rule,
\begin{align*}
\mu_p(s)(\omega)
=
\frac{\mu_0(\omega)\pi_p(s\mid\omega)}
{\sum_{\omega'\in\Omega}\mu_0(\omega')\pi_p(s\mid\omega')},
\end{align*}
whenever $s$ has positive ex ante probability. Second, the agent responds according to a \emph{strategy} that maps the prompt and posterior to a distribution over responses:
\begin{align*}
\sigma:P\times\Del(\Omega)\to\Del(R).
\end{align*}

We focus on a family of prompts that request the agent's posterior $\nu_p\in\Delta(\Y_p)$ over the possible answers to a question. We represent the correct answer by a state-measurable surjective random variable $Y_p:\Omega\to\Y_p$ and define
\begin{align}
\nu_p(\mu)(y)=\sum_{\omega\in\Omega:Y_p(\omega)=y}\mu(\omega),\quad \forall\,y\in\Y_p.
\end{align}
This focus is without loss of generality as we discuss in  \appref{app:partition_reduction}.

For $y\in\Y$, let $\Omega_y:=\{\omega'\in\Omega:Y(\omega')=y\}$ denote the set of states compatible with $y$. In practice, because the state space is enormous and cannot be fully communicated, the partition of $\Omega$ induced by $Y$ should be expected to be very coarse. Some of our results impose the weak coarseness assumption $|\Omega_y|\geq2$ for every $y\in\Y$; we state this assumption explicitly whenever it is needed.

We call the agent \emph{obedient} at prompt $p$ if it reports the requested function with probability one:
\begin{align*}
\sigma(p,\mu)=\delta(\nu_p(\mu)),\quad\forall\mu\in\Delta(\Omega).
\end{align*} 
Here $\delta(x)$ denotes the point mass at $x$.

\begin{assumption}[Obedience]\label{ass:obedience} The agent is obedient at every prompt. 
\end{assumption}
The obedience assumption allows us to isolate the restrictions imposed by Bayesian reasoning from errors or strategic distortions in the reporting stage.
It strengthens our findings that Bayesian reasoning is difficult to reject.

The observer can submit any prompt to the agent any number of times. The state remains fixed across repetitions, and the experiment associated with the same prompt remains unchanged. As such, the agent's inner workings are stationary and outside the observer's control. However, the reasoning process begins afresh in each repetition. That is, conditional on the state, the signals are independent and identically distributed:
\begin{align*}
s_1(p),s_2(p),\dots\mid\omega
\stackrel{\mathrm{i.i.d.}}{\sim}
\pi_p(\cdot\mid\omega).
\end{align*}

If the conditional signal distribution $\pi_p(\cdot\mid\omega)$ is degenerate, i.e., concentrated on a single signal, an obedient agent gives the same answer in every repetition. Otherwise, the agent may form different posteriors and therefore give different reports across repetitions. Thus, Bayesian reasoning is fully compatible with stochastic responses from the agent.

Denote the report distribution induced by prompt $p$ and the true state by
\begin{align*}
\tau_p\in\Del(\Del(\Y_p)).
\end{align*}
Since the signal space is finite, $M_p\triangleq\supp\tau_p$ is finite. By the strong law of large numbers, the empirical distribution of reports converges almost surely to $\tau_p$. An idealized observer who can repeat the prompt indefinitely therefore observes this distribution exactly. Accordingly, we pose our formal results in the following sections in terms of the report distributions.

\begin{remark}[Correct Specification]
We note that the model implies two forms of correct specification. First, it has a grain of truth: because the prior has full support, the true state receives positive prior probability. Second, the signal distribution is correctly specified: for each prompt $p$, conditional on each state $\omega$, the signal is drawn from the same distribution $\pi_p(\cdot\mid\omega)$ that the agent uses in Bayes' rule.
\end{remark}

\begin{remark}[Prompt Dependence] The model does not impose any restrictions on how the agent's reasoning, $E_p$, depends on the prompt $p$. Even a small change in the prompt's framing can change responses (e.g., \citet{SclarEtAl2024}). Thus, we isolate the implications of Bayesian reasoning from those of frame invariance. \autoref{sec:discussion} discusses additional restrictions linking prompts and their implications for evaluation.
\end{remark}

\section{Plausible Report Distributions}\label{sec:distributions}
We first ask when repeated reports can rule out the Bayesian intelligence model. Consider responses to a finite collection of prompts $P$. Although the prompt-specific experiments may differ, they must share the same prior and true state. We say that a collection of observed report distributions $\{\tau_p\}_{p\in P}$ is \emph{Bayes-plausible} if there exist a prior $\mu_0\in\operatorname{int}\Delta(\Omega)$, a state $\omega\in\Omega$, and finite experiments $\{E_p\}_{p\in P}=\{(S_p,\pi_p)\}_{p\in P}$ such that, when the true state is $\omega$, the distribution of the agent's posterior report after prompt $p$ is $\tau_p$ for every $p\in P$.

\Needspace{9\baselineskip}
\begin{theorem}[Bayes-Plausibility]
\label{thm:general_full_belief}
The collection of report distributions $\{\tau_p\}_{p\in P}$ is Bayes-plausible if and only if
\begin{align}
\bigcap_{p\in P}
Y_p^{-1}
\left(
\bigcap_{\nu\in\supp\tau_p}\supp\nu
\right)
\neq
\varnothing.
\label{eq:general_compatibility_multiple}
\end{align}
\end{theorem}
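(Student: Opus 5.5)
Necessity is immediate; sufficiency requires an explicit construction. Throughout, write $K_p:=\bigcap_{\nu\in\supp\tau_p}\supp\nu\subseteq\Y_p$.

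\emph{Necessity.} Suppose $\mu_0$, $\omega$ and $\{E_p\}$ rationalize $\{\tau_p\}$. Fix $p$ and a report $\nu\in\supp\tau_p$. By obedience, $\nu=\nu_p(\mu_p(s))$ for some signal $s$ with $\pi_p(s\mid\omega)>0$. Since $\mu_0(\omega)>0$, Bayes' rule gives $\mu_p(s)(\omega)>0$, so $\nu(Y_p(\omega))>0$. Hence $Y_p(\omega)\in K_p$ for every $p$, and $\omega$ lies in the intersection in \eqref{eq:general_compatibility_multiple}.

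\emph{Sufficiency.} Pick $\omega^*$ in the intersection, so that $y_p^*:=Y_p(\omega^*)\in K_p$ for all $p$. Each $\nu\in M_p$ then puts positive mass on $y_p^*$. Take $\mu_0$ uniform, declare $\omega^*$ the true state, and build each experiment separately, since prompts share only $\mu_0$ and $\omega^*$. For prompt $p$, let $S_p=M_p$ and set $\pi_p(\nu\mid\omega^*)=\tau_p(\nu)$, which produces the required report frequencies at the true state. The remaining task is to choose $\pi_p(\cdot\mid\omega)$ for $\omega\neq\omega^*$ so that every posterior projects to the intended report, i.e.\ $\nu_p(\mu_p(\nu))=\nu$.

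\emph{Construction of the other rows.} Because the prior is uniform, the posterior after signal $\nu$ is proportional to the likelihood vector $\omega\mapsto\pi_p(\nu\mid\omega)$. So it suffices to find, for each $\nu$, a likelihood vector $\ell_\nu(\omega)\ge0$ such that:
\begin{enumerate}
\item $\ell_\nu$ is proportional to a posterior whose $Y_p$-marginal is $\nu$;
\item $\ell_\nu(\omega^*)=\tau_p(\nu)$;
\item $\sum_\nu\ell_\nu(\omega)=1$ for every $\omega$.
\end{enumerate}
The natural first choice is $\ell_\nu(\omega)=c_\nu\,\nu(Y_p(\omega))/|\Omega_{Y_p(\omega)}|$, spreading each answer's mass uniformly within its cell. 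This handles (1) automatically, but generally cannot meet (2) and (3) together. I would fix it by adding a "null" signal $s_0$ that carries no reporting constraint except at the true state. Concretely, scale $\ell_\nu=\varepsilon_\nu\,\tilde\mu_\nu$, where $\tilde\mu_\nu$ is a chosen posterior with marginal $\nu$ and $\tilde\mu_\nu(\omega^*)>0$; this is possible because $\nu(y_p^*)>0$. Choose $\varepsilon_\nu=\tau_p(\nu)/\tilde\mu_\nu(\omega^*)$ so that (2) holds. The small leftover mass at states $\omega\neq\omega^*$ goes to $s_0$, which has zero probability at $\omega^*$ and therefore is never observed; the paper applies Bayes' rule only to signals with positive ex ante probability, and any report from $s_0$ lies outside the observed data. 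Condition (3) then reads $\pi_p(s_0\mid\omega)=1-\sum_\nu\varepsilon_\nu\tilde\mu_\nu(\omega)\ge0$. This holds with $\pi_p(s_0\mid\omega^*)=0$, because $\sum_\nu\tau_p(\nu)=1$.

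\emph{Main obstacle: nonnegativity of the null-signal probability.} Ensuring $\pi_p(s_0\mid\omega)\ge0$ for $\omega\ne\omega^*$ is the step I expect to be delicate. I plan to handle it by putting almost all of each $\tilde\mu_\nu$'s mass in cell $y_p^*$ on $\omega^*$ itself. If $\Omega_{y_p^*}$ contains other states, assign them tiny mass. The mass $\nu(y)$ for $y\neq y_p^*$ is spread over $\Omega_y$. Then $\varepsilon_\nu\approx\tau_p(\nu)/\nu(y_p^*)$ is bounded, and I need $\sum_\nu\varepsilon_\nu\tilde\mu_\nu(\omega)\le1$. This sum is of order $\sum_\nu\tau_p(\nu)\nu(y)/(\nu(y_p^*)|\Omega_y|)$, which could exceed $1$.

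To fix this, I would use the freedom in the prior instead of the uniform prior. Make $\mu_0(\omega^*)$ tiny relative to the other states. The posterior is proportional to $\mu_0(\omega)\ell_\nu(\omega)$, so a small prior weight on $\omega^*$ lets $\ell_\nu(\omega)$ for $\omega\ne\omega^*$ be correspondingly small. Equivalently, choose $\mu_0(\omega)$ large enough that $\ell_\nu(\omega)=\varepsilon_\nu\tilde\mu_\nu(\omega)/\mu_0(\omega)$ sums below one. Since the prior is common across prompts, I would choose $\mu_0(\omega^*)=\eta$ with $\eta$ small enough for all finitely many prompts at once, with the remaining states sharing $1-\eta$. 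All row sums at $\omega\neq\omega^*$ then scale like $\eta$ and fall below $1$, which completes the construction.
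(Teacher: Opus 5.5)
Your proof is correct and is essentially the paper's proof. Like you, the paper lifts each report to a full-state posterior that charges the candidate truth (it uses the uniform within-cell lift). It then sets $\pi_p(s_\nu\mid z)=\tau_p(\nu)\,\frac{\mu_0(\omega^*)}{\mu_0(z)}\,\frac{\tilde\mu_\nu(z)}{\tilde\mu_\nu(\omega^*)}$, adds a residual signal that is null at the truth (its known-prior lemma), and makes $\mu_0(\omega^*)$ small relative to the other states through the explicit weights $w(z)=1+\sum_p H_p(z)$, which is your $\eta$-argument in closed form.

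One bookkeeping point: once you drop the uniform prior, $\varepsilon_\nu$ must be redefined as $\tau_p(\nu)\mu_0(\omega^*)/\tilde\mu_\nu(\omega^*)$. With that rescaling the off-truth row sums become $\frac{\eta}{\mu_0(\omega)}\sum_\nu\tau_p(\nu)\tilde\mu_\nu(\omega)/\tilde\mu_\nu(\omega^*)$, and these do vanish as $\eta\to0$.
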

\begin{compactproof}
See \appref{app:proof:general_full_belief}.
\end{compactproof}

At the true state, a report can only come from a signal with positive probability there. With a full-support prior, Bayes' rule then gives the true answer positive posterior probability. This proves necessity. For sufficiency, the case of a single state is immediate. Otherwise, choose any state consistent with all reports and assign it a sufficiently small positive probability under a common full-support prior. For each prompt, we can extend its answer-level reports to full-state posteriors while preserving their reported marginals. The prior can then be chosen small enough at the candidate state to be able to construct a prompt-specific experiment inducing each distribution at the same state and under the same prior.

The restriction in \autoref{thm:general_full_belief} is therefore a consistency requirement on hard exclusions across prompts and reports. Positive probabilities and their frequencies impose no further restrictions. For repetitions of one prompt, the condition reduces to $\bigcap_{\nu\in\supp\tau}\supp\nu\neq\varnothing$.

\begin{remark}[Adaptive prompting]
This conclusion extends directly to adaptive prompting. Suppose the observer  dynamically chooses a sequence of prompts over a finite number of rounds. We can treat each history together with its next prompt as a distinct prompt. By \autoref{thm:general_full_belief}, the whole response tree is Bayes-plausible if and only if a single state $\omega$ is compatible with every report at every reachable node. In particular, any such tree of full-support reports is Bayes-plausible. For example, no martingale properties should be expected.
\end{remark}

\begin{figure}[!t]
\centering
\begin{tikzpicture}
\begin{groupplot}[
  group style={group size=2 by 1,horizontal sep=1.1cm},
  width=0.46\linewidth,height=3.7cm,
  xmin=0,xmax=1,ymin=0,ymax=0.58,
  xtick={0,0.1,0.4,0.6,0.9,1},
  xticklabels={$0$,$0.1$,$0.4$,$0.6$,$0.9$,$1$},
  ytick={0,0.25,0.5},yticklabels={$0$,$0.25$,$0.5$},
  axis lines=left,axis line style={black!65},
  tick style={black!65},tick align=outside,
  tick label style={font=\small},
  label style={font=\small},title style={font=\small\bfseries},
  xlabel={Reported probability $r=\Pr(Y=1)$},
  ylabel={Frequency},
  ymajorgrids=true,grid style={black!12},
  ybar,/pgf/bar width=9pt,
  enlarge x limits=false,
]
\nextgroupplot[title={Agent 1}]
\addplot[fill=blue!48!black,draw=blue!48!black] coordinates {(0.4,0.5) (0.6,0.5)};
\nextgroupplot[title={Agent 2},ylabel={}]
\addplot[fill=orange!75!black,draw=orange!75!black] coordinates {(0.1,0.5) (0.9,0.5)};
\end{groupplot}
\end{tikzpicture}
\caption{Report distributions for the running example. Each displayed report occurs with probability $1/2$. Agent 1's reports are less variable and less confident than agent 2's.}
\label{fig:running_example}
\end{figure}

\begin{example}\label{ex:running_example}
Let $r=\Pr(Y=1)$ denote a report about a binary question, e.g., whether a patient with a given medical history has chronic kidney disease. \autoref{fig:running_example} shows two stylized distributions of repeated reports, $\tau_1=\tfrac12\delta_{0.4}+\tfrac12\delta_{0.6}$ and $\tau_2=\tfrac12\delta_{0.1}+\tfrac12\delta_{0.9}$. Both have mean $1/2$, but agent 2's reports are more confident and less consistent. By \autoref{thm:general_full_belief}, both distributions are Bayes-plausible: even the sharp disagreement between agent 2's reports does not rule out Bayesian reasoning. One supporting construction is $\Omega=\{0,1\}$ with $Y(\omega)=\omega$, common prior $\mu_0=(5/6,1/6)$, and true state $1$. For rows ordered as $(0,1)$, the conditional signal probabilities
\begin{align}
\pi_1=\begin{pmatrix}
3/20&1/15&47/60\\
1/2&1/2&0
\end{pmatrix},
\qquad
\pi_2=\begin{pmatrix}
9/10&1/90&4/45\\
1/2&1/2&0
\end{pmatrix}
\label{eq:running_binary_experiments}
\end{align}
induce reports $(0.4,0.6,0)$ and $(0.1,0.9,0)$, respectively. The third signal, although it never occurs at the true state, is required for construction at this prior.

Allowing the prior and experiments to vary, replacing agent 2's lower report by any $\varepsilon>0$ sufficiently close to zero preserves Bayes-plausibility. Setting it to exactly zero instead forces the candidate true answer to be $0$; moving the upper report to $1$ as well makes the reports jointly incompatible. The logical implication changes at zero, however small the preceding positive probability was.
\end{example}

\section{Plausible Intelligence Order}\label{sec:order}
Compatibility with Bayesian reasoning does not by itself rank agents. We next ask when their report distributions are compatible with one agent acquiring more informative reasoning outcomes than another. Blackwell informativeness provides a task-specific notion of an intelligence order within our model: it compares the information available for decisions after reasoning, holding prior knowledge fixed.

Formally, we consider two agents and fix a prompt $p$ representing a specific task, so we omit prompt-dependence from notation. The agents have the same prior, but agent $i\in\{1,2\}$  has access to the experiment $E_i=(S_i,\pi_i)$. Denote by $\tau_i$ the report distribution of agent $i$ and let $M_i:=\supp \tau_i$.

As standard, experiment $E_2$ is \emph{Blackwell more informative} than $E_1$ if $E_1$ can be generated by first observing the  signal from $E_2$ and then garbling it, i.e., if there exists a stochastic matrix, $G:S_2\to\Del(S_1)$, such that for every state $\omega'\in\Omega$ and every signal $s_1\in S_1$,
\begin{align*}
\pi_1(s_1\mid\omega')
=
\sum_{s_2\in S_2}\pi_2(s_2\mid\omega')G(s_1\mid s_2).
\end{align*} 
By Blackwell's theorem (\cite{Blackwell1953}), having access to a more informative experiment weakly improves optimal expected payoffs, evaluated under the prior, in every decision problem.

We say that $\tau_2$ \emph{plausibly Blackwell dominates $\tau_1$} if there exist a prior $\mu_0\in\operatorname{int}\Del(\Omega)$, a true state $\omega\in\Omega$, and experiments $E_1,E_2$ such that $E_2$ Blackwell dominates $E_1$ and the report distribution under $E_i$, conditional on $\omega$, is $\tau_i$ for $i\in\{1,2\}$.

With a common prior, the posterior distribution generated by the more informative experiment, averaged across states, is a mean-preserving spread of the distribution generated by the less informative experiment. Repeated prompting instead samples reports at one fixed state. The characterization below shows how much this change in the sampling scheme weakens the observable restrictions.

\Needspace{9\baselineskip}
\begin{theorem}[Plausible Intelligence Order]
\label{prop:blackwell}
Suppose $|\Omega_y|\geq2$ for every $y\in\Y$.
The distribution $\tau_2$ plausibly Blackwell dominates $\tau_1$ if and only if \setcounter{blackwellcondition}{0}\refstepcounter{blackwellcondition}\label{eq:blackwell_common_truth}\textup{(\theblackwellcondition)} there exists $y\in\bigcap_{\nu\in M_1\cup M_2}\supp\nu$, and \refstepcounter{blackwellcondition}\label{eq:blackwell_support_coupling}\textup{(\theblackwellcondition)} there exists a coupling $\gamma$  of $\tau_2$ and $\tau_1$ satisfying $\gamma(\nu_2,\nu_1)>0\Longrightarrow\supp\nu_2\subseteq\supp\nu_1$.\footnote{A \emph{coupling} of $\tau_2$ and $\tau_1$ is a distribution $\gamma\in\Del(M_2\times M_1)$ with respective marginals $\tau_2$ and $\tau_1$.}
\end{theorem}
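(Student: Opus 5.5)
The plan is to prove necessity directly from the garbling representation, and sufficiency by an explicit construction in which agent~2's signals are the pairs in the support of the coupling. The construction reuses the small-prior device from the proof of \autoref{thm:general_full_belief}. Throughout I work with unnormalized likelihood vectors: the posterior after signal $s$ is proportional to $\mu_0(\omega')\pi(s\mid\omega')$.

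\emph{Necessity.} Let $\omega$ be the true state and $y=Y(\omega)$.
\begin{itemize}
\item[(i)] Every report of either agent comes from a signal with $\pi_i(s\mid\omega)>0$. Because the prior has full support, $\omega$ gets positive posterior weight, so $y\in\supp\nu$ for all $\nu\in M_1\cup M_2$. This is the argument from \autoref{thm:general_full_belief}.
\item[(ii)] Let $G$ be the garbling. Define the joint law of $(s_2,s_1)$ at $\omega$ by $\pi_2(s_2\mid\omega)G(s_1\mid s_2)$. Its marginals are $\pi_2(\cdot\mid\omega)$ and $\pi_1(\cdot\mid\omega)$. Push it forward to the pair of reports $(\nu_2(s_2),\nu_1(s_1))$ to obtain a coupling $\gamma$ of $\tau_2$ and $\tau_1$.
\item[] Now suppose $\pi_2(s_2\mid\omega)G(s_1\mid s_2)>0$, and let $\pi_2(s_2\mid\omega')>0$. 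Then $\pi_1(s_1\mid\omega')\ge\pi_2(s_2\mid\omega')G(s_1\mid s_2)>0$. So the full-state posterior support after $s_2$ is contained in that after $s_1$. Projecting through $Y$ gives $\supp\nu_2\subseteq\supp\nu_1$.
\end{itemize}

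\emph{Sufficiency: signals and garbling.} Fix $y$ from (i), pick any $\omega\in\Omega_y$ as the true state, and take $\gamma$ from (ii).
\begin{itemize}
\item For $E_2$, use one signal $s=(\nu_2,\nu_1)$ for each pair in $\supp\gamma$. Add one auxiliary signal $t_{\nu_1}$ for each $\nu_1\in M_1$, and one residual signal $z$.
\item $E_1$ has signals $M_1\cup\{z\}$. The garbling $G$ is deterministic: it sends $(\nu_2,\nu_1)\mapsto\nu_1$, $t_{\nu_1}\mapsto\nu_1$ and $z\mapsto z$.
\item At the true state, set $\pi_2((\nu_2,\nu_1)\mid\omega)=\gamma(\nu_2,\nu_1)$. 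The auxiliary signals and $z$ get zero probability at $\omega$. Hence the report distributions at $\omega$ are exactly $\tau_2$ and $\tau_1$, and $E_2$ Blackwell dominates $E_1$ by construction.
\end{itemize}

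\emph{Sufficiency: likelihoods at other states.} It remains to choose the remaining likelihoods, and the prior, so that every posterior has the right answer-marginal.
\begin{itemize}
\item For a pair signal $s$, I choose nonnegative weights $\mu_0(\omega')\pi_2(s\mid\omega')$ on states. Their $Y$-marginal must be proportional to $\nu_2$, with the weight at $\omega$ fixed at $\mu_0(\omega)\gamma(s)$. This is possible because $\nu_2(y)>0$: the remaining mass on $y$, and all mass on other answers, is spread on the other states.
\item For an agent~1 signal $\nu_1$, the unnormalized posterior is the sum of the pair contributions with second coordinate $\nu_1$, plus the contribution of $t_{\nu_1}$. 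By the support condition, the pair contributions live only on answers in $\supp\nu_1$. Choose a scale $c$ large enough that $c\,\nu_1$ dominates their $Y$-marginal. Then let $t_{\nu_1}$ supply the nonnegative difference, on answers in $\supp\nu_1$ only.
\end{itemize}

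\emph{Main obstacle.} The delicate point is that $t_{\nu_1}$ must have zero probability at $\omega$, yet may need to add mass on the answer $y$ itself. This is exactly where $|\Omega_y|\ge2$ is used: that mass is placed on a state in $\Omega_y\setminus\{\omega\}$. Without a second state the marginal on $y$ could not be corrected, which is why I expect the coarseness assumption to be essential here.

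\emph{Feasibility.} Finally, these likelihoods must sum to at most one in each state $\omega'\neq\omega$, with $z$ absorbing the slack.
\begin{itemize}
\item As in the proof of \autoref{thm:general_full_belief}, first fix all target posteriors. Then take $\mu_0(\omega)$ small, with the other prior weights bounded below. This makes the required likelihoods $\pi_2(s\mid\omega')$ proportionally small, so they sum to less than one.
\item Set $\pi_2(z\mid\omega')$ equal to the remainder. This is automatically zero at $\omega$, since $\gamma$ sums to one there.
\end{itemize}
If $z$ has positive probability, its posterior is irrelevant, because it never occurs at $\omega$. The construction thus realizes $\tau_2$ and $\tau_1$ at $\omega$ under a common full-support prior, with $E_2$ Blackwell dominating $E_1$.
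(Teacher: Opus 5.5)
Your proposal is correct and follows essentially the paper's proof. For necessity, you use the coupling induced by the garbling at the true state; your likelihood-support argument for (ii) is the one the paper uses in the different-priors remark, in place of its barycenter identity. For sufficiency, you use the paper's architecture: one $E_2$ signal per coupled pair, a top-up signal for each destination $\nu_1$ that is null at $\omega$ and puts its $y$-mass on $\Omega_y\setminus\{\omega\}$, and a residual signal with its own destination. The only difference is bookkeeping: the paper fixes ex ante probabilities and reports first and reads off the prior as the mean report, whereas you fix a small $\mu_0(\omega)$ and let the residual signal absorb the slack, as in \autoref{thm:known_prior_full_state}.
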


\begin{compactproof}
See \appref{app:proof:blackwell}.
\end{compactproof}

Intuitively, every report observed at the true state must assign positive probability to the same true value $y$, which gives \eqref{eq:blackwell_common_truth}. Along any report pair with positive coupling mass, a less informative report may restore values that its more informative precursor rules out, but it cannot rule out a value that the precursor assigns positive probability; this is the support inclusion in \eqref{eq:blackwell_support_coupling}. Conversely, because the true answer cell contains another state, that state provides enough probability slack to implement any report values consistent with these two support requirements. (If an answer cell is a singleton, there is an additional  condition presented in  \autoref{lem:blackwell_singleton} in the Appendix.) Thus, the plausibility of the intelligence order can be checked as a finite linear feasibility problem on $M_2\times M_1$, rather than by searching over state-contingent experiments and garblings.

The coupling requirement in \autoref{prop:blackwell} admits an equivalent geometric representation. Associate each report $\nu$ with its minimal simplex face $\Delta(\supp\nu)$. Every inclusion-upward-closed collection of faces must then receive weakly more mass under $\tau_1$ than under $\tau_2$.  Equivalently, $\tau_1$'s induced face distribution first-order stochastically dominates $\tau_2$'s under face inclusion. This representation makes it especially clear that only supports of reported beliefs matter for plausible Blackwell dominance.

Conditions for a plausible intelligence order become especially simple when the reported belief is about a binary event, i.e., $|\Y|=2$. In this case the extreme beliefs are $\nu=(1,0),(0,1)$.

\Needspace{6\baselineskip}
\begin{corollary}[Intelligence Order. Binary Event.]
\label{cor:blackwell_binary}
If $|\Y|=2$ and $|\Omega_y|\geq2$ for every $y\in\Y$, then the distribution $\tau_2$ plausibly Blackwell dominates $\tau_1$ if and only if (i) $\supp\tau_1\cup\supp\tau_2$ contains at most one extreme belief and (ii) $\tau_2(\nu)\geq \tau_1(\nu)$ for both extreme beliefs $\nu$.
\end{corollary}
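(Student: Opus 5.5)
The plan is to derive the corollary directly from \autoref{prop:blackwell}, specializing its two conditions to $\Y=\{0,1\}$. The hypothesis $|\Omega_y|\geq2$ is exactly the one the theorem needs, so it suffices to show that (i) and (ii) of the corollary are equivalent to conditions (\ref{eq:blackwell_common_truth}) and (\ref{eq:blackwell_support_coupling}) of \autoref{prop:blackwell}. In the binary case a belief $\nu\in\Del(\Y)$ has one of three supports: $\{0,1\}$ for interior beliefs, $\{1\}$ for $\nu=(0,1)$, and $\{0\}$ for $\nu=(1,0)$. Hence the face structure reduces to a poset with top element $\{0,1\}$ and two incomparable minimal elements $\{0\}$ and $\{1\}$.

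\emph{Condition (i).} Let $M=M_1\cup M_2$. The intersection $\bigcap_{\nu\in M}\supp\nu$ is empty exactly when $M$ contains both $(1,0)$ and $(0,1)$. If $M$ contains at most one of them, the intersection contains $y=1$ or $y=0$, and it is all of $\Y$ if $M$ has only interior beliefs. So (\ref{eq:blackwell_common_truth}) is equivalent to (i).

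\emph{Condition (ii).} Assume (i). Then at most one extreme belief, say $e$ with support $\{y^*\}$, has positive mass in either distribution; the other extreme belief has mass zero under both, so its inequality in (ii) holds trivially. For necessity, take a coupling $\gamma$ as in (\ref{eq:blackwell_support_coupling}). If $\gamma(\nu_2,e)>0$, then $\supp\nu_2\subseteq\{y^*\}$, so $\nu_2=e$. Therefore $\tau_1(e)=\sum_{\nu_2}\gamma(\nu_2,e)=\gamma(e,e)\leq\tau_2(e)$.

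For sufficiency, assume $\tau_2(e)\geq\tau_1(e)$ and build $\gamma$ explicitly:
\begin{itemize}
\item place mass $\tau_1(e)$ on the pair $(e,e)$;
\item couple the remaining mass $\tau_2(e)-\tau_1(e)$ of $e$ under $\tau_2$, together with all interior reports of $\tau_2$, arbitrarily (for instance by the product coupling of the normalized remainders) with the interior reports of $\tau_1$.
\end{itemize}
The total masses agree, since $1-\tau_1(e)$ equals $\tau_2(e)-\tau_1(e)+\tau_2(\mathrm{int})$. Every pair in the second group has $\supp\nu_1=\{0,1\}$, so the inclusion holds; the pair $(e,e)$ satisfies it trivially. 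Hence (\ref{eq:blackwell_support_coupling}) holds. If no extreme belief appears at all, both conditions hold trivially: any coupling works, because every $\nu_1$ has full support.

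\emph{Assembly and main difficulty.} Combining the two steps with \autoref{prop:blackwell} gives the corollary. The only point requiring care is the bookkeeping in the sufficiency construction, namely making sure the residual masses match so that $\gamma$ really has the marginals $\tau_2$ and $\tau_1$. Equivalently, one could invoke the upward-closed-set (face stochastic dominance) formulation stated after the theorem. In that formulation the upward-closed collections are $\{\{0,1\}\}$, $\{\{0,1\},\{0\}\}$, $\{\{0,1\},\{1\}\}$, and the full collection, and the resulting inequalities reduce exactly to (ii).
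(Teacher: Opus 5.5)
Your proposal is correct and follows the paper's proof. You specialize the common-truth condition of \autoref{prop:blackwell} to ``at most one extreme belief'', and you note that a support-respecting coupling can pair an extreme $\tau_1$-report only with the same extreme $\tau_2$-report. For sufficiency, you match $\tau_1(e)$ on $(e,e)$ and couple all residual mass to interior $\tau_1$-reports, which is exactly the paper's argument (when $\tau_1=\delta_e$ the residual mass is zero, so no normalization is needed).
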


\begin{compactproof}
See \appref{app:proof:blackwell_binary}.
\end{compactproof}

\begin{remark}[Different Priors]
In defining the plausible intelligence order, we required the agents to share a prior to place them on equal footing. However, because the order effectively depends only on the supports of the reports, exactly the same conditions apply even when the agents are allowed to have different priors, e.g., if they are trained on different datasets (see \appref{app:proof:different_priors}).
\end{remark}

\begin{remark}[Evaluation Caution]
Plausible dominance means that some compatible experiments have the proposed order; it does not establish that the agents' actual experiments do. Its interpretation also relies on obedience (\autoref{ass:obedience}). For example, under the conditions of \autoref{cor:blackwell_binary}, a disobedient agent that mechanically reports the same extreme belief has a distribution that plausibly Blackwell dominates any distribution containing only interior reports. This is a compatible explanation of its outputs, even though the reporting rule itself acquires no information. The criterion therefore restricts possible intelligence orders without providing a sufficient test of actual superiority.
\end{remark}

\begin{examplecontinued}[ctd.]
On the two-state space of \autoref{sec:distributions}, no Blackwell ordering can generate the two histograms in \autoref{fig:running_example}, under any common prior (see \appref{app:running_example_order}). Allowing unreported distinctions within an answer makes either order possible. For example, let $\Omega=\{0,1,2\}$, with $Y(0)=0$ and $Y(1)=Y(2)=1$, true state $1$, and the same full-state prior $\mu_0=(100,1,19)/120$ for both agents. This preserves $\Pr(Y=1)=1/6$. States $1$ and $2$ represent circumstances that can affect reasoning without changing the answer to the question. Let $E_2$ have four signals $s_1,\ldots,s_4$ and conditional signal probabilities (rows ordered as $(0,1,2)$)
\begin{align}
\pi_2=\begin{pmatrix}
9/200&1/200&3/20&4/5\\
1/2&1/2&0&0\\
0&4/19&15/19&0
\end{pmatrix}.
\label{eq:running_three_state_experiment}
\end{align}
The four signals induce reports $(0.1,0.9,0.5,0)$, respectively. Construct $E_1$ by garbling $s_1$ to $t_1$, $s_2$ to $t_2$, and $s_4$ to $t_3$; after $s_3$, choose $t_1$ or $t_2$ with equal probability. The resulting signals $(t_1,t_2,t_3)$ induce reports $(0.4,0.6,0)$. At true state $1$, only the first two signals of each experiment occur, each with probability $1/2$, reproducing the plotted histograms. Thus agent 2's less consistent report distribution is compatible with a strictly more informative experiment. \appref{app:running_example_order} constructs the opposite strict order under the same full-state prior. The histograms therefore cannot distinguish these opposite information rankings.

By the proof of \autoref{cor:blackwell_binary}, either ranking remains plausible as agent 2's upper report approaches $1$, allowing the common prior and experiments to vary. At $1$, however, only agent 2 can plausibly be more informative: it sometimes rules out $Y=0$, while agent 1 excludes neither answer.
\end{examplecontinued}

\section{Plausible Information Aggregation}\label{sec:aggregation}
Repeated reports can also be combined to improve an answer, as in self-consistency methods that aggregate sampled reasoning paths by their final answers \citep{WangEtAl2023}. We ask which aggregate posteriors are compatible with Bayesian updating when the experiment generating the reports is unknown. This connects optimal information extraction from one reasoning agent to robust forecast aggregation across Bayesian experts with an unknown information structure \citep{ArieliEtAl2018Robust}.

Formally, the observer has a finite history of $T$ reports $h_T=(\nu_1,\dots,\nu_T)$ from repeated instances of a prompt about a random variable $Y$. To define optimal Bayesian aggregation, consider the benchmark in which the observer knows the prior $\mu_0$ and the experiment $E$ associated with the prompt. Write $\Pr$ for probability under this prior and experiment. The optimal belief aggregation is then given by the posterior prescribed by Bayes' rule as
\begin{align}
\nu^T(h_T)(y)
:=
\Pr(Y(\omega)=y\mid h_T)
\quad
\forall\,y\in\Y.
\end{align}

For any given $h_T$, we say that $\eta\in\Delta(\Y)$ is a \emph{Bayes-plausible aggregate after observing $h_T$} if there exist a full-support prior $\mu_0$, a finite experiment $E$, and a true state $\omega\in\Omega$ such that $h_T$ has positive probability conditional on $\omega$ and $\nu^T(h_T)=\eta$. For a full-support prior $\mu_0$, we say that $\eta$ is a \emph{$\mu_0$-Bayes-plausible aggregate after observing $h_T$} if the same conditions hold with the prior fixed at $\mu_0$.

\Needspace{9\baselineskip}
\begin{theorem}[Bayes-Plausible Aggregation]
\label{prop:bayes_plausible_aggregation}
Suppose that $|\Omega_y|\geq2$ for every $y\in\Y$ and $h_T$ contains at least two distinct reports.\footnote{\autoref{lem:single_observed_report} in the Appendix characterizes the case of all reports being the same.} For any full-support $\mu_0\in\Delta(\Omega)$, $\eta\in\Delta(\Y)$ is a $\mu_0$-Bayes-plausible aggregate after observing $h_T$ if and only if
\begin{align}
\supp\eta\subseteq\supp\nu_i
\qquad
\forall\,i=1,\dots,T.
\label{eq:aggregation_support}
\end{align}
\end{theorem}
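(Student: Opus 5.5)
The plan is to work with the full-state likelihood representation of Bayesian aggregation. Necessity uses only the zero-probability structure. Sufficiency is a constructive tuning of full-state extensions of the reports, followed by a splitting-lemma step that makes the fixed prior $\mu_0$ consistent with the chosen posteriors.

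\emph{Necessity.} Fix $\mu_0$, $E$, and a state $\omega$ at which $h_T$ has positive probability. Then
\begin{align*}
\Pr(\omega'\mid h_T)\propto \mu_0(\omega')\prod_{i=1}^T \pi(S(\nu_i)\mid\omega'),
\end{align*}
where $S(\nu)$ is the set of signals inducing report $\nu$. The denominator is positive because $h_T$ has positive probability at $\omega$. Suppose $\nu_i(y)=0$. Since $\mu_0$ has full support, every $s\in S(\nu_i)$ satisfies $\pi(s\mid\omega')=0$ for all $\omega'\in\Omega_y$. Hence every such $\omega'$ gets zero aggregate probability, so $\eta(y)=0$, which gives \eqref{eq:aggregation_support}.

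\emph{Sufficiency, reduction.} Suppose we pick one signal per distinct report with full-state posterior $\rho_\nu$ satisfying $\nu_{\mu}(\rho_\nu)=\nu$, where $\nu_\mu$ denotes the $Y$-marginal. Then $\pi(s_\nu\mid\omega')=q_\nu\rho_\nu(\omega')/\mu_0(\omega')$ for some constant $q_\nu>0$. Substituting, the aggregate is the $Y$-marginal of the normalization of
\begin{align*}
\mu_0(\omega')^{1-T}\prod_{i}\rho_{\nu_i}(\omega').
\end{align*}
The experiment is completed by adding one residual signal with posterior
\begin{align*}
\bigl(\mu_0-\varepsilon\textstyle\sum_\nu\rho_\nu\bigr)/\bigl(1-\varepsilon|M|\bigr),
\end{align*}
where $M$ is the set of distinct reports in $h_T$. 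For small $\varepsilon$ this is a full-support belief because $\mu_0$ is interior, so by the splitting lemma the posteriors average to $\mu_0$ and form a valid experiment at the fixed prior. The true state will be any $\omega^*$ with $\rho_\nu(\omega^*)>0$ for all $\nu\in M$, which makes $h_T$ positive-probability at $\omega^*$. It therefore remains to choose the extensions $\rho_\nu$ so that the product's cell masses are proportional to $\eta$.

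\emph{Sufficiency, tuning (main step).} This step uses both $|\Omega_y|\ge 2$ and the existence of two distinct reports $\nu_a\neq\nu_b$ in $h_T$. For each $y$, fix two states $\omega_y,\omega'_y\in\Omega_y$ and let every $\rho_\nu$ put zero mass on the rest of $\Omega_y$.
\begin{itemize}
\item If $\eta(y)=0$ but all $\nu_i(y)>0$, let $\nu_a$ place all of $\nu_a(y)$ on $\omega_y$ and $\nu_b$ place all of $\nu_b(y)$ on $\omega'_y$. The product then vanishes on the whole cell. If some $\nu_i(y)=0$, the cell vanishes automatically.
\item If $\eta(y)>0$, so that $y$ lies in every report's support, let $\nu_a$ put fraction $f_y$ on $\omega_y$ and $\nu_b$ put fraction $1-f_y$ there, with complementary fractions on $\omega'_y$. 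Every other distinct report splits its cell mass in half. The cell mass of the product is then a continuous function
\begin{align*}
g_y(f_y)=A_yf_y^{k_a}(1-f_y)^{k_b}+B_y(1-f_y)^{k_a}f_y^{k_b},
\end{align*}
where $k_a,k_b\ge1$ are the multiplicities of $\nu_a,\nu_b$ in $h_T$ and $A_y,B_y>0$. It tends to $0$ as $f_y\to0$ and is positive on $(0,1)$, so by the intermediate value theorem it attains every value in $(0,\max g_y]$.
\end{itemize}
Choose $\kappa>0$ with $\kappa\eta(y)\le\max g_y$ for all $y\in\supp\eta$, and solve $g_y(f_y)=\kappa\eta(y)$ with $f_y\in(0,1)$. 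Normalizing yields exactly $\eta$. Any $\omega^*=\omega_y$ with $y\in\supp\eta$ then has positive mass under every $\rho_\nu$, as required.

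I expect this tuning step to be the main obstacle. If reports were identical, the product could not be pushed to zero within a cell, and sending some cell ratio to zero genuinely requires two reports that split a cell differently. That is why distinctness of the reports enters, and why the all-identical case is handled separately by \autoref{lem:single_observed_report}.
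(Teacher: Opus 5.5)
\textbf{Overall.} Your necessity argument is correct. Your sufficiency route is essentially the paper's: write the aggregate as the $Y$-marginal of the normalized $\mu_0^{1-T}\prod_i\rho_{\nu_i}$, implement the lifts at the fixed prior with small weights plus a residual, and tune the lifts using two states per cell and two distinct reports. Your two-report tuning via the intermediate value theorem is a valid alternative to the paper's version. The paper concentrates every report except one $\beta$ on $a_y$ and splits only $\beta$, so that $W_y=A_yt_y^{n(\beta)}$ can be solved in closed form.

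\textbf{The gap: the residual signal may reproduce an observed report.} Your aggregation formula presupposes that each observed report $\nu\in M$ is produced only by its own signal $s_\nu$. The observer conditions on reports, not signals. If the residual signal's $Y$-marginal equals some $\nu\in M$, the effective lift of $\nu$ becomes a mixture of $\rho_\nu$ and the full-support residual posterior, and your product no longer describes the aggregate. You never rule this out, and with a single residual at uniform weight $\varepsilon$ it cannot always be ruled out. The residual's marginal is $(m_0-\varepsilon\sum_{\nu\in M}\nu)/(1-\varepsilon|M|)$, where $m_0$ is the prior's $Y$-marginal. If $m_0$ equals the barycenter of $M$ and that barycenter is itself in $M$, the residual reports this observed belief for every $\varepsilon$.

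\textbf{Concrete failure.} Take distinct reports $\nu_1=(0.2,0.8)$, $\nu_2=(0.5,0.5)$, $\nu_3=(0.8,0.2)$ and a prior with $\mu_0(\Omega_0)=\mu_0(\Omega_1)=1/2$. The residual always reports $(0.5,0.5)$. The effective lift of $\nu_2$ is then $(\mu_0-\varepsilon(\rho_{\nu_1}+\rho_{\nu_3}))/(1-2\varepsilon)$. This is a full-support belief that does not depend on your chosen $\rho_{\nu_2}$ at all. In particular, if $\nu_2$ serves as $\nu_a$ or $\nu_b$, the zeros your tuning relies on disappear, and cells you meant to kill can receive positive weight.

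\textbf{How to close it.} The paper splits the residual into $\mu^r\pm\varepsilon d$ with $d=\delta(a)-\delta(b)$ and $Y(a)\neq Y(b)$, then excludes finitely many $\varepsilon$ so that neither residual report lies in $M$. Alternatively, you could choose the weights $q_\nu$ unequal and generic. This works because $|M|\geq2$ makes each collision condition $m_0-\nu'=\sum_\nu q_\nu(\nu-\nu')$ a proper affine constraint on $(q_\nu)$.
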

\begin{compactproof}
See \appref{app:proof:bayes_plausible_aggregation}.
\end{compactproof}
 
As in the previous sections, if the observer does not know the experiment, the only guaranteed inference comes from hard evidence: any answer value $y\in\Y$ excluded by some report must also be excluded by the aggregate. Also, because condition \eqref{eq:aggregation_support} does not depend on $\mu_0$, it also characterizes Bayes-plausible aggregates when the prior is unknown.

\begin{remark}[Aggregation Bottleneck]
This result contrasts with the familiar formula for aggregation with a known prior when the agent reports beliefs about the full state (see \autoref{lem:full_belief_aggregation} in the appendix). In that setting, each report is a posterior $\mu$ over the complete state $\omega$, repetitions are conditionally i.i.d. given that state, and the Bayesian aggregate is a prior-corrected sum of reported log odds: for any $\omega,\omega'\in\supp\nu^T(h_T)$,
\begin{align}
\log\frac{\nu^T(h_T)(\omega)}{\nu^T(h_T)(\omega')}
=
\sum_{i=1}^{T}\log\frac{\nu_i(\omega)}{\nu_i(\omega')}
-(T-1)\log\frac{\mu_0(\omega)}{\mu_0(\omega')}.
\end{align}
Importantly, the formula highlights that the experiment \emph{need not be known} for optimal aggregation. Thus, it is the inability to communicate full posterior about the state of the world that creates a bottleneck for knowledge extraction. 

The reason is that repetitions are independent conditional on the complete state $\omega$, but generally not conditional on the coarser value $y=Y(\omega)$. The distribution is exchangeable in the repetitions, but the common latent state $\omega$ can correlate all reports after conditioning only on $y$. A full-state report identifies the state-by-state likelihood ratios needed in the product formula, whereas a coarse report identifies only their prior-weighted averages within each cell $\Omega_y$. Those averages do not determine the likelihood of a history without knowing the experiment, which is why coarse-report aggregation is much less restrictive.
\end{remark}

Although \autoref{prop:bayes_plausible_aggregation} permits many aggregates, it excludes some natural rules. Arithmetic averaging is not Bayes-plausible whenever one report assigns probability zero to an answer and another assigns it positive probability. Affine log-odds aggregation respects this support restriction and is exactly optimal in a Gaussian model (see \appref{app:affine_log_odds}).

\begin{examplecontinued}[ctd.]
We now show that even unlimited repetition can leave the Bayesian aggregate undetermined. Consider agent 2's report distribution in \autoref{fig:running_example}. Split the previous state $0$ into two and relabel the resulting space as $\Omega=\{0,1,2,3\}$, with $Y(0)=Y(1)=0$ and $Y(2)=Y(3)=1$. Let the prior be $\mu_0=(1,99,1,19)/120$. \appref{app:running_example_aggregation} constructs two fixed experiments with this prior and the same entire observed report distribution $\tau_2$. In one rationalization state $2$ is true and the Bayesian aggregate converges almost surely to $1$; in the other state $0$ is true and it converges almost surely to $0$. Each experiment is fixed for all $T$, and both generate the same distribution over every finite report history. The empirical mean converges to $1/2$ in both cases, while the Bayesian aggregates converge to opposite answers.

By contrast, replacing the lower report by an exact zero forces the aggregate to assign zero probability to $Y=1$ as soon as that report occurs. Thus, for purposes of aggregation, there is a substantive distinction between an arbitrarily small positive probability and zero.
\end{examplecontinued}

\section{Future Directions}\label{sec:discussion}
Our framework provides a theoretical basis for interpreting existing ML findings and developing more specialized models of reasoning. One direction is to extend our aggregation results by imposing additional structure on the internal experiments. Relevant applications include self-consistency, which selects the most frequent answer across reasoning attempts \citep{WangEtAl2023}, and semantic uncertainty, which treats answers with the same meaning as one outcome when measuring uncertainty \citep{KuhnGalFarquhar2023}. Such models could clarify when agreement across sampled answers predicts correctness and how those answers should be combined.

A second direction is to ask what checking or revising an answer adds. \citet{SnellEtAl2025} study verifier-guided search and answer revision, finding that effective computation allocation depends on question difficulty. Our information order could help assess when these procedures leave the agent better informed. If each step retains all earlier signals, additional signals cannot reduce Blackwell informativeness, but informativeness can decrease if only a summary of earlier reasoning is retained.

A third direction is to elicit more information about the agent's internal experiment. Direct elicitation may be problematic, as the agent may not understand its own inner workings. Instead, simpler, indirect tests could build on existing tests of Bayesian updating \citep{FalckWangHolmes2024,KimKimThorne2025}. For example, one could ask for a belief and the agent's probabilities for possible evidence, then supply new evidence and ask for an updated belief.

Our results provide a theoretical foundation for these directions and highlight that rigorous conclusions require explicit assumptions about how agents reason and report.
\label{iclr:main-text-end}

\medskip
\noindent \textbf{Acknowledgments:} Bryan Wilder gratefully acknowledges the following support for his work. Research was sponsored by the Army Research Office and was accomplished under Grant Number W911NF-25-1-0281. The views and conclusions contained in this document are those of the authors and should not be interpreted as representing the official policies, either expressed or implied, of the Army Research Office or the U.S. Government. The U.S. Government is authorized to reproduce and distribute reprints for Government purposes notwithstanding any copyright notation herein. This material is also based upon work supported by the AI Research Institutes Program funded by the National Science
Foundation under AI Institute for Societal Decision Making
(AI-SDM), Award No. 2229881.

Alex Smolin gratefully acknowledges funding from the French National Research Agency (ANR) under the Investments for the Future program (grant ANR-17-EURE-0010) and the AI Interdisciplinary Institute ANITI (grant ANR-23-IACL-0002).

\newpage
\appendix
\section{Proofs}
\label{app:proofs}

\subsection{Reduction to Reports on a Partition}
\label{app:partition_reduction}
We can interpret an arbitrary prompt as requesting the value of a function of the agent's posterior, $f_p:\Del(\Omega)\to R$. For a partition $\mathcal Q_p=\{Q_1,\ldots,Q_K\}$ of $\Omega$, define
\begin{align*}
A_p(\mu)(k):=\sum_{\omega\in Q_k}\mu(\omega),
\qquad k=1,\ldots,K.
\end{align*}
If $f_p$ is constant on each fiber of $A_p$, it factors as $f_p=g_p\circ A_p$ for some $g_p:\Del(\{1,\ldots,K\})\to R$. Such a partition always exists: the singleton partition makes $A_p$ a bijection. Set $Y_p(\omega)=k$ on $Q_k$. Then $\nu=A_p(\mu)$ is the posterior of $Y_p$, and the original response is $r=g_p(\nu)$, with no change to the experiment.

\subsection{Known Prior and Full-State Reports}
\label{app:proof:known_prior_full_state}
The next lemma characterizes the restrictions a known prior imposes on full-state reports. It also provides the implementation step in the proof of \autoref{thm:general_full_belief}: we lift the observed reports to full-state posteriors and choose a common prior satisfying the lemma's conditions for every prompt.

Throughout this subsection, set $\Y=\Omega$ and let $Y$ be the identity map. Fix a full-support prior $\mu_0$ and a candidate true state $\omega$. A finite-support distribution $\tau$ of full posteriors is \emph{$(\mu_0,\omega)$-Bayes-plausible} if some finite experiment induces $\tau$ conditional on $\omega$.

Whenever every $\nu\in\supp\tau$ satisfies $\nu(\omega)>0$, define
\begin{align}
H(\tau,\omega;z)
:=
\sum_{\nu\in\supp\tau}
\tau(\nu)\frac{\nu(z)}{\nu(\omega)}.
\label{eq:known_prior_odds_moment}
\end{align}

\begin{lemma}[Known-Prior Bayes-Plausibility]
\label{thm:known_prior_full_state}
Fix $\mu_0\in\operatorname{int}\Delta(\Omega)$ and $\omega\in\Omega$. A finite-support full-posterior report distribution $\tau$ is $(\mu_0,\omega)$-Bayes-plausible if and only if
\begin{align}
\nu(\omega)&>0
&&\text{for every }\nu\in\supp\tau,
\label{eq:known_prior_truth_support}\\
H(\tau,\omega;z)&\leq
\frac{\mu_0(z)}{\mu_0(\omega)}
&&\text{for every }z\in\Omega.
\label{eq:known_prior_moment_inequalities}
\end{align}
The inequality for $z=\omega$ always holds with equality.
\end{lemma}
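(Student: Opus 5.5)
The plan is to write the experiment in terms of posteriors and their unconditional probabilities, and to use Bayes' rule in its likelihood-ratio form. Suppose a finite experiment $(S,\pi)$ induces $\tau$ at $\omega$. Write $q(s)=\sum_{\omega'}\mu_0(\omega')\pi(s\mid\omega')$ for the ex ante probability of $s$ and $\mu_s$ for the posterior after $s$. Bayes' rule gives $\pi(s\mid z)=q(s)\mu_s(z)/\mu_0(z)$ for every state $z$. Only signals with $\pi(s\mid\omega)>0$ are realized at $\omega$. For these signals $\mu_s(\omega)>0$, because the prior has full support, and this gives condition \eqref{eq:known_prior_truth_support}.

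For the inequality, I will group the realized signals by the posterior they induce. Then
\[
\tau(\nu)=\sum_{s:\mu_s=\nu}\pi(s\mid\omega)=\frac{\nu(\omega)}{\mu_0(\omega)}\sum_{s:\mu_s=\nu}q(s).
\]
Hence $\tau(\nu)\nu(z)/\nu(\omega)=\frac{\nu(z)}{\mu_0(\omega)}\sum_{s:\mu_s=\nu}q(s)=\frac{\mu_0(z)}{\mu_0(\omega)}\sum_{s:\mu_s=\nu}\pi(s\mid z)$. Summing over $\nu\in\supp\tau$ gives
\[
H(\tau,\omega;z)=\frac{\mu_0(z)}{\mu_0(\omega)}\,\Pr(\text{realized-at-}\omega\text{ signals}\mid z)\le\frac{\mu_0(z)}{\mu_0(\omega)}.
\]
For $z=\omega$ the probability is one, so equality holds there. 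This direction is mechanical.

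For sufficiency, I will build an experiment with one signal $s_\nu$ for each $\nu\in\supp\tau$, plus at most one extra signal $s_0$ that never occurs at $\omega$. The natural definition is
\[
\pi(s_\nu\mid z):=\frac{\mu_0(\omega)}{\mu_0(z)}\,\tau(\nu)\frac{\nu(z)}{\nu(\omega)}.
\]
This gives $\pi(s_\nu\mid\omega)=\tau(\nu)$, and the entries are nonnegative. Summing over $\nu$ yields $\frac{\mu_0(\omega)}{\mu_0(z)}H(\tau,\omega;z)$. By \eqref{eq:known_prior_moment_inequalities} this sum is at most $1$, and it equals $1$ at $z=\omega$. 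The remaining mass $1-\frac{\mu_0(\omega)}{\mu_0(z)}H(\tau,\omega;z)\ge0$ is assigned to $s_0$; this is exactly where the inequality is used.

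It remains to check that the posterior after $s_\nu$ is $\nu$. We have $\mu_0(z)\pi(s_\nu\mid z)=\mu_0(\omega)\tau(\nu)\nu(z)/\nu(\omega)$, which is proportional to $\nu(z)$ in $z$. The constant factor is positive, since $\tau(\nu)>0$ for $\nu$ in the support, so normalizing returns exactly $\nu$. The posterior after $s_0$ is never reported at $\omega$, so it plays no role.

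I do not expect a real obstacle. The only points needing care are the two edge cases. First, distinct signals inducing the same posterior must be merged in the necessity step, which is handled by grouping. Second, $s_0$ should be included only when the slack is positive for some $z$, and otherwise omitted, so that every signal has positive ex ante probability and its posterior is well defined.
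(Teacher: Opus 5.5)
Your proof is correct and follows the paper's argument essentially step for step. Necessity uses Bayes' rule in likelihood form with signals grouped by the posterior they induce. Sufficiency uses the same construction $\pi(s_\nu\mid z)=\frac{\mu_0(\omega)}{\mu_0(z)}\tau(\nu)\frac{\nu(z)}{\nu(\omega)}$ plus a residual signal that never occurs at $\omega$. The only step the paper states explicitly and you leave implicit is discarding ex ante null signals before applying Bayes' rule; this is harmless, because under a full-support prior such signals have zero likelihood at every state.
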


\begin{proof}
Let $M:=\supp\tau$. Suppose first that a finite experiment $(S,\pi)$ induces $\tau$ at $\omega$. Delete ex ante null signals, let $q(s):=\sum_x\mu_0(x)\pi(s\mid x)$, and let $\nu_s$ be the posterior induced by $s$. If $\pi(s\mid\omega)>0$, then
\begin{align*}
\nu_s(\omega)
=
\frac{\mu_0(\omega)\pi(s\mid\omega)}{q(s)}
>0.
\end{align*}
Every report in $M$ is induced by at least one such signal, which proves \eqref{eq:known_prior_truth_support}.

For every $z\in\Omega$, Bayes' rule gives
\begin{align*}
H(\tau,\omega;z)
&=
\sum_{s:\,\pi(s\mid\omega)>0}
\pi(s\mid\omega)
\frac{\nu_s(z)}{\nu_s(\omega)}
=
\frac{\mu_0(z)}{\mu_0(\omega)}
\sum_{s:\,\pi(s\mid\omega)>0}\pi(s\mid z)
\leq
\frac{\mu_0(z)}{\mu_0(\omega)}.
\end{align*}
If several signals induce the same report, their probabilities at $\omega$ add to $\tau(\nu)$, so the first equality is valid. This proves necessity.

Conversely, suppose \eqref{eq:known_prior_truth_support} and \eqref{eq:known_prior_moment_inequalities} hold. Introduce one signal $s_\nu$ for each $\nu\in M$ and set
\begin{align}
\pi(s_\nu\mid z)
:=
\frac{\mu_0(\omega)}{\mu_0(z)}
\tau(\nu)
\frac{\nu(z)}{\nu(\omega)}.
\label{eq:known_prior_construct_likelihood}
\end{align}
Then $\pi(s_\nu\mid\omega)=\tau(\nu)$ and, for every $z$,
\begin{align*}
\sum_{\nu\in M}\pi(s_\nu\mid z)
=
\frac{\mu_0(\omega)}{\mu_0(z)}H(\tau,\omega;z)
\leq1.
\end{align*}
Add a residual signal $r$ with
\begin{align*}
\pi(r\mid z)
:=
1-\sum_{\nu\in M}\pi(s_\nu\mid z),
\end{align*}
omitting it if its likelihood is identically zero. Since $H(\tau,\omega;\omega)=1$, the residual signal satisfies $\pi(r\mid\omega)=0$.

The ex ante probability of $s_\nu$ is
\begin{align*}
q(s_\nu)
&=
\sum_{z\in\Omega}\mu_0(z)\pi(s_\nu\mid z)
=
\frac{\mu_0(\omega)\tau(\nu)}{\nu(\omega)}.
\end{align*}
Therefore
\begin{align*}
\Pr(z\mid s_\nu)
=
\frac{\mu_0(z)\pi(s_\nu\mid z)}{q(s_\nu)}
=
\nu(z).
\end{align*}
Thus $s_\nu$ induces $\nu$, occurs at $\omega$ with probability $\tau(\nu)$, and the residual signal never occurs at $\omega$. The induced report distribution is exactly $\tau$.
\end{proof}

The proof also shows that every rationalizing experiment satisfies
\begin{align}
\frac{\mu_0(\omega)}{\mu_0(z)}H(\tau,\omega;z)
&=
\sum_{s:\,\pi(s\mid\omega)>0}\pi(s\mid z)
=
1-\sum_{s:\,\pi(s\mid\omega)=0}\pi(s\mid z).
\label{eq:known_prior_missing_signal_mass}
\end{align}
Hence \eqref{eq:known_prior_moment_inequalities} holds with equality for every $z$ if and only if no signal possible at any state is impossible at $\omega$. In particular, equality holds when the experiment has common signal support across states.

For a fixed candidate truth, the identified set of full-support priors is
\begin{align}
\left\{
\mu_0\in\operatorname{int}\Delta(\Omega):
\mu_0(z)\geq\mu_0(\omega)H(\tau,\omega;z)
\ \text{for every }z\in\Omega
\right\}.
\label{eq:known_prior_identified_set}
\end{align}
Summing the inequalities gives
\begin{align}
\mu_0(\omega)
&\leq
\left(\sum_{z\in\Omega}H(\tau,\omega;z)\right)^{-1}
=
\left(
\EE_{\nu\sim\tau}\left[\frac{1}{\nu(\omega)}\right]
\right)^{-1}.
\label{eq:known_prior_harmonic_bound}
\end{align}
Jensen's inequality then implies $\EE_\tau[\nu(\omega)]\geq\mu_0(\omega)$. This is the \emph{truth-drifting} property presented by \citet{FrancetichKreps2014} and \citet{DovalSmolin2024}. The inverse-probability restriction is sharper than this mean restriction. In the binary case it is the full restriction. If $\Omega=\{0,1\}$, $\mu_0(1)=q$, the true state is $1$, and a report assigns probability $r$ to state $1$, Bayes-plausibility is equivalent to
\begin{align}
\EE_{r\sim\tau}\left[\frac{1-r}{r}\right]
\leq
\frac{1-q}{q},
\qquad\text{equivalently}\qquad
\EE_{r\sim\tau}\left[\frac{1}{r}\right]\leq\frac{1}{q}.
\label{eq:known_prior_binary}
\end{align}
If the true state is not specified, $\tau$ is Bayes-plausible under $\mu_0$ if and only if some $\omega\in\bigcap_{\nu\in\supp\tau}\supp\nu$ satisfies \eqref{eq:known_prior_moment_inequalities}. If the prior is also unrestricted, choosing $\mu_0(\omega)$ sufficiently small makes all inequalities hold. Thus the lemma reduces to the single-prompt full-state case of \autoref{thm:general_full_belief} when both the prior and the true state may vary.

\subsection{Proof of \texorpdfstring{\autoref{thm:general_full_belief}}{Bayes-Plausibility theorem}}
\label{app:proof:general_full_belief}
\begin{proof}
Suppose first that the collection is Bayes-plausible, and let $\omega$ be the common true state. For every $p\in P$ and $\nu\in\supp\tau_p$, some signal inducing $\nu$ occurs at $\omega$. Full support of the prior and Bayes' rule imply that the corresponding full posterior assigns positive probability to $\omega$. Its $Y_p$-marginal therefore satisfies $\nu(Y_p(\omega))>0$. Hence $\omega$ belongs to the intersection in \eqref{eq:general_compatibility_multiple}.

Conversely, suppose \eqref{eq:general_compatibility_multiple} holds and choose $\omega$ in that intersection. For each $p$, write $\Omega_p(y):=Y_p^{-1}(y)$ and lift each report $\nu\in\supp\tau_p$ by distributing its mass uniformly within each cell:
\begin{align}
L_p(\nu)(\omega')
:=
\frac{\nu(Y_p(\omega'))}{|\Omega_p(Y_p(\omega'))|}.
\label{eq:multiple_prompt_full_posterior_lift}
\end{align}
The posterior $L_p(\nu)$ has $Y_p$-marginal $\nu$ and assigns positive probability to $\omega$. Let $\widehat\tau_p$ be its distribution when $\nu\sim\tau_p$, and write $H_p(z):=H(\widehat\tau_p,\omega;z)$. These odds moments are finite and satisfy $H_p(\omega)=1$.

Choose a common prior by normalizing the positive weights
\begin{align*}
w(z):=
\begin{cases}
1,
&z=\omega,\\
1+\sum_{p\in P}H_p(z),
&z\neq\omega,
\end{cases}
\qquad
\mu_0(z):=\frac{w(z)}{\sum_{x\in\Omega}w(x)}.
\end{align*}
Then, for every $p\in P$ and $z\in\Omega$,
\begin{align}
H_p(z)
\leq
\frac{\mu_0(z)}{\mu_0(\omega)}.
\label{eq:multiple_prompt_prior_bound}
\end{align}
By \autoref{thm:known_prior_full_state}, each $\widehat\tau_p$ is induced at $\omega$ under this prior by a finite experiment. Reporting its $Y_p$-marginal induces $\tau_p$, proving sufficiency, including when $|\Omega|=1$.
\end{proof}

\subsection{Proof of \texorpdfstring{\autoref{prop:blackwell}}{Intelligence Order theorem}}
\label{app:proof:blackwell}
For the intelligence-order results below, let
\begin{align*}
C:=\bigcap_{\nu\in M_1\cup M_2}\supp\nu.
\end{align*}

\begin{proof}
\emph{Necessity.}
Delete ex ante null signals. Let $\omega$ be the true state, set $y:=Y(\omega)$, and let $q_j(s_j)$ and $\nu_j(s_j)$ be the ex ante probability and posterior report associated with signal $s_j$. Bayes' rule gives
\begin{align}
q_j(s_j)\nu_j(s_j)(z)
=
\sum_{\omega'\in\Omega_z}
\mu_0(\omega')\pi_j(s_j\mid\omega')
\qquad
\text{for every }z\in\Y.
\label{eq:blackwell_cell_bayes}
\end{align}
Every signal occurring at $\omega$ assigns positive probability to $y$, so $y\in C$.

Let $G$ garble $E_2$ into $E_1$. Conditional on $\omega$, draw $s_2$ and then $s_1$ from $G(\cdot\mid s_2)$. The induced distribution $\gamma$ of $(\nu_2(s_2),\nu_1(s_1))$ is a coupling of $\tau_2$ and $\tau_1$. Moreover,
\begin{align}
q_1(s_1)\nu_1(s_1)
=
\sum_{s_2\in S_2}
q_2(s_2)G(s_1\mid s_2)\nu_2(s_2).
\label{eq:blackwell_signal_barycenter}
\end{align}
If $(\nu_2,\nu_1)$ has positive $\gamma$-probability, some pair $(s_2,s_1)$ with those reports satisfies $\pi_2(s_2\mid\omega)G(s_1\mid s_2)>0$. If $\nu_1(z)=0$, equation \eqref{eq:blackwell_signal_barycenter} and nonnegativity force $\nu_2(z)=0$. Hence $\supp\nu_2\subseteq\supp\nu_1$ on the support of $\gamma$.

\emph{Sufficiency.}
Fix $y\in C$, a coupling $\gamma$ satisfying \eqref{eq:blackwell_support_coupling}, and a state $\omega\in\Omega_y$. We first give an implementation step.

Consider finitely many abstract signals $s$. Assign each signal an ex ante probability $q(s)>0$, a probability $a(s)\geq0$ at $\omega$, and a report $v_s\in\Del(\Y)$. Suppose
\begin{align*}
\sum_s q(s)=\sum_s a(s)=1,
\qquad
m:=\sum_s q(s)v_s\in\operatorname{int}\Del(\Y),
\end{align*}
and there is $x\in(0,m(y))$ such that
\begin{align*}
q(s)v_s(y)\geq xa(s)
\qquad
\text{for every }s.
\end{align*}
Choose a full-support prior with cell masses $\mu_0(\Omega_z)=m(z)$, assign mass $x$ to $\omega$, and distribute the remaining mass $m(y)-x$ strictly positively over $\Omega_y\setminus\{\omega\}$. This is possible because $|\Omega_y|\geq2$. Define
\begin{align*}
\pi(s\mid\omega')
:=
\begin{cases}
a(s),
&\omega'=\omega,\\
q(s)v_s(z)/m(z),
&\omega'\in\Omega_z,\ z\neq y,\\
\bigl(q(s)v_s(y)-xa(s)\bigr)/(m(y)-x),
&\omega'\in\Omega_y\setminus\{\omega\}.
\end{cases}
\end{align*}
The likelihoods are nonnegative and sum to one at every state. Signal $s$ has ex ante probability $q(s)$ and induces report $v_s$.

We now construct the abstract signals for $E_2$. Fix $c:=1/2$. By the support condition, one can choose $\varepsilon\in(0,c)$ so small that, for every $\nu_1\in M_1$,
\begin{align*}
w(\nu_1)
:=
c\tau_1(\nu_1)\nu_1
-
\varepsilon
\sum_{\nu_2\in M_2}\gamma(\nu_2,\nu_1)\nu_2
\end{align*}
is nonnegative and strictly positive on $\supp\nu_1$. A nonzero vector $w\in\RR_+^{\Y}$ represents a signal with ex ante probability $\lVert w\rVert_1$ and report $w/\lVert w\rVert_1$. Since $\gamma$ has second marginal $\tau_1$,
\begin{align*}
\lVert w(\nu_1)\rVert_1=(c-\varepsilon)\tau_1(\nu_1).
\end{align*}

For every pair with $\gamma(\nu_2,\nu_1)>0$, create a signal with report $\nu_2$, ex ante probability $\varepsilon\gamma(\nu_2,\nu_1)$, and probability $\gamma(\nu_2,\nu_1)$ at $\omega$. Garble it to a destination indexed by $\nu_1$. For each $\nu_1$, add the signal represented by $w(\nu_1)$, give it probability zero at $\omega$, and garble it to the same destination. Finally, add a signal with an interior report, ex ante probability $1-c$, probability zero at $\omega$, and a separate destination.

The ex ante probabilities sum to
\begin{align*}
\varepsilon+(c-\varepsilon)+(1-c)=1,
\end{align*}
and the probabilities at $\omega$ sum to one. The mean report is interior because the last signal has positive ex ante probability and an interior report. Choose
\begin{align*}
0<x<\varepsilon\min_{\nu_2\in M_2}\nu_2(y).
\end{align*}
For a signal indexed by $(\nu_2,\nu_1)$, the implementation inequality is
\begin{align*}
\varepsilon\gamma(\nu_2,\nu_1)\nu_2(y)
\geq
x\gamma(\nu_2,\nu_1).
\end{align*}
It is automatic for all signals with probability zero at $\omega$. Moreover, the pair signals alone give the mean report more than $x$ at $y$. The implementation step therefore produces $E_2$.

At the destination indexed by $\nu_1$, the prior-weighted report vector is
\begin{align*}
\varepsilon
\sum_{\nu_2\in M_2}\gamma(\nu_2,\nu_1)\nu_2
+w(\nu_1)
=
c\tau_1(\nu_1)\nu_1,
\end{align*}
while its probability at $\omega$ is $\sum_{\nu_2}\gamma(\nu_2,\nu_1)=\tau_1(\nu_1)$. Thus the garbled report is $\nu_1$. The ungarbled report distribution at $\omega$ is $\tau_2$, the garbled report distribution is $\tau_1$, and $E_2$ Blackwell dominates $E_1$.
\end{proof}

\subsection{Proof of the Different-Priors Remark}
\label{app:proof:different_priors}
Modify the definition of plausible Blackwell dominance by allowing agent $i$ to have its own full-support prior $\mu_i$ while retaining a common true state. If $|\Omega_y|\geq2$ for every $y\in\Y$, conditions \eqref{eq:blackwell_common_truth} and \eqref{eq:blackwell_support_coupling} continue to characterize plausible Blackwell dominance.
\begin{proof}
For each ex ante non-null signal $s_i$, write $\nu_i(s_i)$ for agent $i$'s report. Bayes' rule and full support of $\mu_i$ give
\begin{align}
z\in\supp\nu_i(s_i)
\quad\Longleftrightarrow\quad
\pi_i(s_i\mid\omega')>0
\text{ for some }\omega'\in\Omega_z.
\label{eq:different_prior_report_support}
\end{align}
Thus posterior support depends only on the experiment's likelihoods, not on
the positive weights in the prior.

For necessity, let $\omega$ be the common true state and let $G$ garble $E_2$ into $E_1$. As in the necessity proof of \autoref{prop:blackwell}, $Y(\omega)\in C$, and drawing $s_2$ at $\omega$ followed by $s_1$ from $G$ induces a coupling $\gamma$ of the report distributions. This construction does not require a common prior. Every pair in the support of $\gamma$ is induced by a signal pair with $\pi_2(s_2\mid\omega)G(s_1\mid s_2)>0$. For any
$z\in\supp\nu_2(s_2)$, \eqref{eq:different_prior_report_support} gives an
$\omega'\in\Omega_z$ such that $\pi_2(s_2\mid\omega')>0$. Therefore
\begin{align*}
\pi_1(s_1\mid\omega')
&=
\sum_{t\in S_2}\pi_2(t\mid\omega')G(s_1\mid t)
\geq
\pi_2(s_2\mid\omega')G(s_1\mid s_2)>0.
\end{align*}
Another application of \eqref{eq:different_prior_report_support} yields
$z\in\supp\nu_1(s_1)$, proving the required support inclusion.

Conversely, if the two conditions hold, \autoref{prop:blackwell} supplies a
rationalization with a common full-support prior. This is also admissible when
the agents are allowed, but not required, to have different priors.
\end{proof}

\subsection{Proof of \texorpdfstring{\autoref{cor:blackwell_binary}}{binary-event corollary}}
\label{app:proof:blackwell_binary}
\begin{proof}
Let $\nu^0$ and $\nu^1$ be the two extreme beliefs. Every other report has support $\Y$, so the common-truth condition in \autoref{prop:blackwell} holds exactly when $M_1\cup M_2$ contains at most one of $\nu^0$ and $\nu^1$. Under a support-respecting coupling, a less informative extreme report $\nu^k$ can only be paired with the same more informative report, implying $\tau_2(\nu^k)\geq\tau_1(\nu^k)$. Conversely, if these inequalities hold, match the mass $\tau_1(\nu^k)$ at each extreme belief and couple all residual mass to the nonextreme reports under $\tau_1$; every such pair satisfies support inclusion. The result follows from \autoref{prop:blackwell}.
\end{proof}

\begin{lemma}[Intelligence Order for $|\Omega_y|=1$]
\label{lem:blackwell_singleton}
Fix $y\in\Y$ such that $\Omega_y=\{\omega\}$. There exist a full-support prior and experiments $E_1,E_2$ such that $E_2$ Blackwell dominates $E_1$ and the report distribution under $E_i$, conditional on $\omega$, is $\tau_i$ for $i\in\{1,2\}$ if and only if $y\in C$ and there is a coupling $\gamma$ of $\tau_2$ and $\tau_1$ satisfying, coordinatewise,
\begin{align}
\sum_{\nu_2\in M_2}
\gamma(\nu_2,\nu_1)
\frac{\nu_2}{\nu_2(y)}
\leq
\tau_1(\nu_1)
\frac{\nu_1}{\nu_1(y)}
\qquad
\text{for every }\nu_1\in M_1.
\label{eq:blackwell_singleton_inequality}
\end{align}
This inequality implies $\supp\nu_2\subseteq\supp\nu_1$ whenever $\gamma(\nu_2,\nu_1)>0$.
\end{lemma}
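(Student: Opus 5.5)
The plan is to exploit one identity that holds when the true cell is a singleton. If $\Omega_y=\{\omega\}$, then \eqref{eq:blackwell_cell_bayes} at $z=y$ reads $q(s)\nu_s(y)=\mu_0(\omega)\pi(s\mid\omega)$ for every ex ante non-null signal $s$. Hence, whenever $\nu_s(y)>0$, the prior-weighted report vector satisfies
\begin{align*}
q(s)\nu_s=\mu_0(\omega)\,\pi(s\mid\omega)\,\frac{\nu_s}{\nu_s(y)} .
\end{align*}
Thus, up to the common factor $\mu_0(\omega)$, the barycentric weight of a signal is pinned down by its probability at the truth. Both directions of the lemma are this identity combined with the garbling barycenter equation \eqref{eq:blackwell_signal_barycenter}.

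\emph{Necessity.} The condition $y\in C$ follows exactly as in the necessity part of \autoref{prop:blackwell}. Let $G$ garble $E_2$ into $E_1$, and let $\gamma$ be the coupling obtained by drawing $s_2$ at $\omega$ and then $s_1\sim G(\cdot\mid s_2)$.

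Fix a signal $s_1$ occurring at $\omega$, so that $\nu_1(s_1)(y)>0$. Split the sum in \eqref{eq:blackwell_signal_barycenter} into two groups:
\begin{itemize}
\item[] signals $s_2$ with $\pi_2(s_2\mid\omega)>0$, whose terms equal $\mu_0(\omega)\pi_2(s_2\mid\omega)G(s_1\mid s_2)\,\nu_2/\nu_2(y)$ by the identity;
\item[] signals with $\pi_2(s_2\mid\omega)=0$, whose terms are nonnegative vectors.
\end{itemize}
Dropping the second group gives the vector inequality
\begin{align*}
\pi_1(s_1\mid\omega)\frac{\nu_1(s_1)}{\nu_1(s_1)(y)}\ \geq\ \sum_{s_2}\pi_2(s_2\mid\omega)G(s_1\mid s_2)\frac{\nu_2(s_2)}{\nu_2(s_2)(y)} .
\end{align*}
Summing over all $s_1$ that induce the same $\nu_1$, and grouping the $s_2$ by their reports, yields \eqref{eq:blackwell_singleton_inequality}.

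The final claim of the lemma is immediate. If $\gamma(\nu_2,\nu_1)>0$ and $\nu_1(z)=0$, then the left side of \eqref{eq:blackwell_singleton_inequality} must vanish at $z$, which forces $\nu_2(z)=0$.

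\emph{Sufficiency.} I adapt the implementation step from the proof of \autoref{prop:blackwell}. The difference is that the state $\omega$ now carries the whole cell $\Omega_y$, so each signal's mass at $y$ must be exactly $x\,a(s)$, with $x=\mu_0(\omega)$. There is no longer any slack inequality.

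Fix a small $x>0$ and build $E_2$ from three kinds of signals.
\begin{itemize}
\item[] \emph{Pair signals.} For each pair with $\gamma(\nu_2,\nu_1)>0$, create a signal with weighted vector $x\gamma(\nu_2,\nu_1)\nu_2/\nu_2(y)$ and probability $\gamma(\nu_2,\nu_1)$ at $\omega$. Garble it to a destination $\nu_1$.
\item[] \emph{Slack signals.} For each $\nu_1\in M_1$, create a signal with vector
\begin{align*}
w(\nu_1):=x\Bigl(\tau_1(\nu_1)\frac{\nu_1}{\nu_1(y)}-\sum_{\nu_2}\gamma(\nu_2,\nu_1)\frac{\nu_2}{\nu_2(y)}\Bigr),
\end{align*}
probability zero at $\omega$, and the same destination $\nu_1$. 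Include it only if $w(\nu_1)\neq0$. This vector is nonnegative by \eqref{eq:blackwell_singleton_inequality}. Its $y$-coordinate is $x\bigl(\tau_1(\nu_1)-\sum_{\nu_2}\gamma(\nu_2,\nu_1)\bigr)=0$ because of the marginal condition, which is consistent with zero probability at $\omega$.
\item[] \emph{Residual signal.} Add one residual signal with probability zero at $\omega$. Its report is supported on $\Y\setminus\{y\}$ with full support there, and its ex ante mass makes the total equal to one. This mass is positive once $x$ is small.
\end{itemize}

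With these signals, the mean $m$ has $m(y)=x$ and $m(z)>0$ for every $z\neq y$. Take a prior with $\mu_0(\omega)=x$ and cell masses $\mu_0(\Omega_z)=m(z)$. Set $\pi(s\mid\omega)=a(s)$ and $\pi(s\mid\omega')=q(s)v_s(z)/m(z)$ for $\omega'\in\Omega_z$, $z\neq y$.

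The step I expect to be the main obstacle is verifying that each constructed signal induces the intended report. This relies on the exact equality $q(s)v_s(y)=x\,a(s)$ at the singleton cell, which holds by design for all three kinds of signals. The destination $\nu_1$ then aggregates to the vector $x\tau_1(\nu_1)\nu_1/\nu_1(y)$ with probability $\tau_1(\nu_1)$ at $\omega$, so the garbled signal reports $\nu_1$. At $\omega$, the ungarbled signals reproduce $\tau_2$ and the garbled ones reproduce $\tau_1$.

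The degenerate case $\Y=\{y\}$ is trivial: then every report equals $\delta_y$, so $\tau_1$ and $\tau_2$ coincide as point masses.
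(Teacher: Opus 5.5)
Your proposal is correct and follows the paper's proof. Necessity uses the singleton-cell identity $q(s)\nu_s(y)=\mu_0(\omega)\pi(s\mid\omega)$ in the garbling barycenter equation and drops the nonnegative terms from signals absent at $\omega$; sufficiency uses the same pair, slack, and residual signals with $\mu_0(\omega)=x$ and cell masses $m(z)$. One detail is left implicit: the residual signal must be garbled to its own separate destination, as in the paper, so that it does not change the reports at the destinations indexed by $\nu_1$.
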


\begin{proof}
The necessity argument for \autoref{prop:blackwell} gives $y\in C$ and a signal coupling. Let $t:=\mu_0(\omega)$. Since $\Omega_y=\{\omega\}$, \eqref{eq:blackwell_cell_bayes} gives
\begin{align*}
q_j(s_j)\nu_j(s_j)(y)
=
t\pi_j(s_j\mid\omega).
\end{align*}
Fix $\nu_1\in M_1$, sum \eqref{eq:blackwell_signal_barycenter} over all signals inducing $\nu_1$, and retain only signal pairs that occur at $\omega$. All omitted terms are nonnegative, so
\begin{align*}
t\tau_1(\nu_1)\frac{\nu_1}{\nu_1(y)}
\geq
t\sum_{\nu_2\in M_2}
\gamma(\nu_2,\nu_1)
\frac{\nu_2}{\nu_2(y)}.
\end{align*}
Canceling $t>0$ proves \eqref{eq:blackwell_singleton_inequality}. If $\nu_1(z)=0$, this condition and nonnegativity imply $\nu_2(z)=0$ for every $\nu_2$ paired with $\nu_1$, proving the stated support implication.

For sufficiency, the case $|\Y|=1$ is immediate. Suppose $|\Y|\geq2$ and choose $x>0$ such that
\begin{align*}
x\sum_{\nu_1\in M_1}\frac{\tau_1(\nu_1)}{\nu_1(y)}<1.
\end{align*}
For every pair with $\gamma(\nu_2,\nu_1)>0$, create a signal with report $\nu_2$, probability $\gamma(\nu_2,\nu_1)$ at $\omega$, and ex ante probability
\begin{align*}
q(\nu_2,\nu_1)
:=
x\frac{\gamma(\nu_2,\nu_1)}{\nu_2(y)}.
\end{align*}
Garble it to a destination indexed by $\nu_1$. Define the residual vector
\begin{align*}
w(\nu_1)
:=
x\left(
\tau_1(\nu_1)\frac{\nu_1}{\nu_1(y)}
-
\sum_{\nu_2\in M_2}
\gamma(\nu_2,\nu_1)
\frac{\nu_2}{\nu_2(y)}
\right).
\end{align*}
It is nonnegative by \eqref{eq:blackwell_singleton_inequality}, and its $y$-coordinate is zero. If $w(\nu_1)\neq0$, add the signal represented by this vector to the same destination and give it probability zero at $\omega$. Summing the pair masses and the $\ell_1$-norms of the residual vectors gives total ex ante probability
\begin{align*}
x\sum_{\nu_1\in M_1}\frac{\tau_1(\nu_1)}{\nu_1(y)}<1.
\end{align*}
Use the remaining probability for a signal that has probability zero at $\omega$, assigns zero probability to $y$, assigns positive probability to every other value, and has a separate destination.

Let $q(s)$, $a(s)$, and $v_s$ denote the ex ante probability, probability at $\omega$, and report of each signal. Their ex ante probabilities and probabilities at $\omega$ both sum to one. Their mean report $m:=\sum_s q(s)v_s$ is interior, $m(y)=x$, and
\begin{align*}
q(s)v_s(y)=xa(s)
\qquad
\text{for every }s.
\end{align*}
Give each cell $\Omega_z$ prior mass $m(z)$, distributing it strictly positively within the cell. In particular, $\mu_0(\omega)=x$. Define
\begin{align*}
\pi_2(s\mid\omega')
:=
\begin{cases}
a(s),
&\omega'=\omega,\\
q(s)v_s(z)/m(z),
&\omega'\in\Omega_z,\ z\neq y.
\end{cases}
\end{align*}
These likelihoods sum to one at every state. Since $q(s)v_s(y)=xa(s)$, signal $s$ has ex ante probability $q(s)$ and induces report $v_s$.

At the destination indexed by $\nu_1$, the prior-weighted report vector is $x\tau_1(\nu_1)\nu_1/\nu_1(y)$ and the probability at $\omega$ is $\tau_1(\nu_1)$. Hence its report is $\nu_1$. The ungarbled and garbled report distributions at $\omega$ are $\tau_2$ and $\tau_1$.
\end{proof}

\subsection{Full-Posterior Aggregation}
\label{app:full_belief_aggregation}
\begin{lemma}[Full-Posterior Aggregation]
\label{lem:full_belief_aggregation}
Suppose that the prompt asks the agent to report its full posterior over the state of the world, so that each $\nu_i\in\Delta(\Omega)$. Then, for every history $h_T$ that occurs with positive probability,
\begin{align}
\nu^T(h_T)(\omega)
=
\frac{
\mu_0(\omega)^{1-T}
\prod_{i=1}^{T}\nu_i(\omega)
}{
\sum_{\omega'\in\Omega}
\mu_0(\omega')^{1-T}
\prod_{i=1}^{T}\nu_i(\omega')
}
\qquad
\text{for every }\omega\in\Omega.
\label{eq:full_belief_aggregation}
\end{align}
In particular, for any $\omega,\omega'\in\supp\nu^T(h_T)$,
\begin{align}
\log\frac{\nu^T(h_T)(\omega)}{\nu^T(h_T)(\omega')}
=
\sum_{i=1}^{T}\log\frac{\nu_i(\omega)}{\nu_i(\omega')}
-(T-1)\log\frac{\mu_0(\omega)}{\mu_0(\omega')}.
\label{eq:full_belief_log_odds}
\end{align}
\end{lemma}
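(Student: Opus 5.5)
We prove \autoref{lem:full_belief_aggregation} by Bayes' rule applied to the whole history, using conditional independence given the full state. Let the experiment be $E=(S,\pi)$, and let $h_T$ be generated by signals $s_1,\dots,s_T$. Several signals may induce the same full posterior, so we first group signals by their reported posterior. For a full-state report $\nu\in\Delta(\Omega)$, let $S(\nu)$ be the set of ex ante non-null signals inducing $\nu$, and set $\pi(\nu\mid\omega'):=\sum_{s\in S(\nu)}\pi(s\mid\omega')$. Summing Bayes' rule over $S(\nu)$ gives, with $q(\nu):=\sum_{\omega'}\mu_0(\omega')\pi(\nu\mid\omega')>0$,
\begin{align*}
\nu(\omega')=\frac{\mu_0(\omega')\pi(\nu\mid\omega')}{q(\nu)},
\qquad\text{so}\qquad
\pi(\nu\mid\omega')=\frac{q(\nu)\nu(\omega')}{\mu_0(\omega')}.
\end{align*}
This holds for every $\omega'$ because $\mu_0$ has full support. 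This inversion step is the key point: because the report is a posterior over the complete state, it pins down every state's likelihood up to a factor $q(\nu)$ that does not depend on $\omega'$.

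Next, repetitions are i.i.d.\ conditional on the state. Hence the probability of the report history given $\omega'$ is
\begin{align*}
\Pr(h_T\mid\omega')=\prod_{i=1}^T\pi(\nu_i\mid\omega')=\Bigl(\prod_{i=1}^T q(\nu_i)\Bigr)\mu_0(\omega')^{-T}\prod_{i=1}^T\nu_i(\omega').
\end{align*}
Bayes' rule then gives $\nu^T(h_T)(\omega)\propto\mu_0(\omega)\Pr(h_T\mid\omega)$. The factor $\prod_i q(\nu_i)$ is the same for every state and cancels on normalization, which yields \eqref{eq:full_belief_aggregation}. The denominator is positive because $h_T$ has positive probability: at the true state every $\nu_i$ is positive, by the argument in \autoref{thm:known_prior_full_state}.

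Finally, for $\omega,\omega'\in\supp\nu^T(h_T)$, every $\nu_i(\omega)$ and $\nu_i(\omega')$ is positive. Taking the ratio of \eqref{eq:full_belief_aggregation} at the two states and then logarithms gives \eqref{eq:full_belief_log_odds} directly. The only step needing care is the first: aggregating signals into report classes must be valid, including when distinct signals induce the same posterior. It is valid because Bayes' rule is linear in the grouped likelihoods and all signals in a class share the same posterior.
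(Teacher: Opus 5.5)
Your proposal is correct and follows essentially the same route as the paper: you invert Bayes' rule to get the report likelihood $q(\nu)\nu(\omega)/\mu_0(\omega)$, multiply across repetitions by conditional independence given the full state, cancel the state-independent factor, and take log ratios. The only difference is that you explicitly group signals into report classes, whereas the paper works directly with the ex ante report probability $q(\nu)$; this makes the same argument more precise.
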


\begin{proof}
Let $q(\nu)$ be the ex ante probability of report $\nu$. Since the report is the posterior itself, Bayes' rule gives
\begin{align}
\Pr(\text{report}=\nu\mid\omega)
=
q(\nu)\frac{\nu(\omega)}{\mu_0(\omega)}.
\label{eq:report_likelihood_full_belief}
\end{align}
Conditional independence therefore implies
\begin{align*}
\Pr(\omega\mid h_T)
&\propto
\mu_0(\omega)
\prod_{i=1}^{T}
q(\nu_i)\frac{\nu_i(\omega)}{\mu_0(\omega)}
\propto
\mu_0(\omega)^{1-T}
\prod_{i=1}^{T}\nu_i(\omega).
\end{align*}
Normalizing proves \eqref{eq:full_belief_aggregation}; taking log odds proves \eqref{eq:full_belief_log_odds}.
\end{proof}

\subsection{Proof of \texorpdfstring{\autoref{prop:bayes_plausible_aggregation}}{Bayes-Plausible Aggregation theorem}}
\label{app:proof:bayes_plausible_aggregation}
\begin{proof}
Let $M:=\{\nu_i:i=1,\ldots,T\}$ be the set of distinct observed reports, and let $n(\nu)$ be the multiplicity of $\nu\in M$. For each $\nu\in M$, let
\begin{align*}
\mu^\nu(\omega)
:=
\Pr(\omega\mid\text{report}=\nu)
\end{align*}
be its full-state posterior lift. Its $Y$-marginal is the report:
\begin{align}
\sum_{\omega\in\Omega_y}\mu^\nu(\omega)
=
\nu(y)
\qquad
\text{for every }\nu\in M\text{ and }y\in\Y.
\label{eq:aggregation_lift_marginal}
\end{align}
If $q_\nu$ is the ex ante probability of report $\nu$, then
\begin{align*}
\Pr(\text{report}=\nu\mid\omega)
=
q_\nu\frac{\mu^\nu(\omega)}{\mu_0(\omega)}.
\end{align*}
Conditional independence gives
\begin{align}
\Pr(\omega\mid h_T)
\propto
\mu_0(\omega)^{1-T}
\prod_{\nu\in M}\mu^\nu(\omega)^{n(\nu)}.
\label{eq:aggregation_lift_formula}
\end{align}
Thus, if
\begin{align}
W_y
:=
\sum_{\omega\in\Omega_y}
\mu_0(\omega)^{1-T}
\prod_{\nu\in M}\mu^\nu(\omega)^{n(\nu)},
\label{eq:aggregation_cell_weight}
\end{align}
then the aggregate assigns probability $W_y/\sum_zW_z$ to $y$. If this probability is positive, some $\omega\in\Omega_y$ satisfies $\mu^\nu(\omega)>0$ for every $\nu\in M$. By \eqref{eq:aggregation_lift_marginal}, every observed report then assigns positive probability to $y$. This proves necessity.

We next record a fixed-prior implementation step. Consider any finite collection of full-state posteriors $(\mu^\nu)_{\nu\in M}$ satisfying \eqref{eq:aggregation_lift_marginal}. Choose numbers $p_\nu>0$ so small that
\begin{align*}
\sum_{\nu\in M}p_\nu<1,
\qquad
\sum_{\nu\in M}p_\nu\mu^\nu(\omega)<\mu_0(\omega)
\quad
\text{for every }\omega.
\end{align*}
Set
\begin{align*}
p_r
:=
1-\sum_{\nu\in M}p_\nu,
\qquad
\mu^r
:=
\frac{\mu_0-\sum_{\nu\in M}p_\nu\mu^\nu}{p_r}.
\end{align*}
Then $\mu^r$ has full support. The theorem's hypothesis of two distinct reports implies $|\Y|\geq2$. Choose states $a,b\in\Omega$ with $Y(a)\neq Y(b)$, and let $d:=\delta(a)-\delta(b)$. For all sufficiently small $\varepsilon>0$, the distributions
\begin{align*}
\mu^+:=\mu^r+\varepsilon d,
\qquad
\mu^-:=\mu^r-\varepsilon d
\end{align*}
have full support. By excluding finitely many values of $\varepsilon$, their $Y$-marginals can also be chosen outside the finite set $M$. Define
\begin{align}
\pi(s_\nu\mid\omega)
&:=
p_\nu\frac{\mu^\nu(\omega)}{\mu_0(\omega)},
&
\pi(r^\pm\mid\omega)
&:=
\frac{p_r}{2}
\frac{\mu^\pm(\omega)}{\mu_0(\omega)}.
\label{eq:aggregation_implementation}
\end{align}
These likelihoods sum to one at every state because
\begin{align*}
\sum_{\nu\in M}p_\nu\mu^\nu
+
\frac{p_r}{2}(\mu^++\mu^-)
=
\mu_0.
\end{align*}
Bayes' rule gives posterior $\mu^\nu$ after $s_\nu$, while the residual signals produce reports outside $M$. Thus any such collection of lifts is implementable under the fixed prior, with no additional signal producing an observed report.

For sufficiency, let $D:=\supp\eta\subseteq\bigcap_{\nu\in M}\supp\nu$. Choose a report $\beta\in M$ and distinct states $a_y,b_y\in\Omega_y$ for every $y$. For parameters $t_y\in[0,\beta(y)]$, define the lifts in every cell by

\begin{align*}
\mu^\nu(a_y)&:=\nu(y)
&&\text{for }\nu\neq\beta,
&
\mu^\beta(a_y)&:=t_y,
&
\mu^\beta(b_y)&:=\beta(y)-t_y.
\end{align*}
Set every unspecified coordinate to zero. These lifts satisfy \eqref{eq:aggregation_lift_marginal}. Since $M$ contains a report other than $\beta$, only $a_y$ can contribute to the cell weight $W_y$.

For $y\in D$, define
\begin{align*}
A_y
:=
\mu_0(a_y)^{1-T}
\prod_{\nu\in M\setminus\{\beta\}}\nu(y)^{n(\nu)}
>0.
\end{align*}
Choose $c>0$ small enough that
\begin{align*}
t_y
:=
\left(\frac{c\eta(y)}{A_y}\right)^{1/n(\beta)}
<
\beta(y)
\qquad
\text{for every }y\in D.
\end{align*}
Then \eqref{eq:aggregation_cell_weight} gives $W_y=A_yt_y^{n(\beta)}=c\eta(y)$ on $D$. Set $t_y=0$ off $D$, so that $W_y=0$ there. The resulting aggregate is $\eta$.

The fixed-prior implementation step realizes these lifts. For any $y\in D$, every observed report has positive conditional probability at the state $a_y$, so the history occurs with positive probability there. This proves sufficiency.
\end{proof}

\begin{lemma}[Bayes-Plausible Aggregation of Identical Reports]
\label{lem:single_observed_report}
Suppose $|\Omega_y|\geq2$ for every $y\in\Y$ and $h_T=(\nu,\ldots,\nu)$. For $T=1$, the unique Bayes-plausible aggregate is $\eta=\nu$, also under any fixed full-support prior.
For $T\geq2$, an aggregate $\eta$ is Bayes-plausible if and only if
\begin{align}
\supp\eta=\supp\nu.
\label{eq:single_report_support}
\end{align}
For a fixed full-support prior $\mu_0$, define
\begin{align}
L_y:=\frac{\nu(y)^T}{\mu_0(\Omega_y)^{T-1}},
\qquad
U_y:=\frac{\nu(y)^T}{\bigl(\min_{\omega\in\Omega_y}\mu_0(\omega)\bigr)^{T-1}}
\quad(y\in\supp\nu).
\label{eq:single_report_bounds}
\end{align}
Then $\eta$ is $\mu_0$-Bayes-plausible if and only if \eqref{eq:single_report_support} holds and some $c>0$ satisfies
\begin{align}
L_y\leq c\eta(y)\leq U_y
\quad\text{for all }y\in\supp\nu,
\label{eq:single_report_fixed_prior}
\end{align}
or, equivalently,
\begin{align}
\max_{y\in\supp\nu}\frac{L_y}{\eta(y)}
\leq
\min_{y\in\supp\nu}\frac{U_y}{\eta(y)}.
\label{eq:single_report_intersection}
\end{align}
\end{lemma}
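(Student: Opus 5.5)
The plan is to reuse the machinery from the proof of \autoref{prop:bayes_plausible_aggregation}: reduce the aggregate to an explicit function of a single full-state lift $\mu^\nu$, and then read off its range over all admissible lifts. With $M=\{\nu\}$ and $n(\nu)=T$, formula \eqref{eq:aggregation_lift_formula} gives
\begin{align*}
W_y=\sum_{\omega\in\Omega_y}\mu_0(\omega)^{1-T}\mu^\nu(\omega)^T,
\qquad
\eta(y)=\frac{W_y}{\sum_z W_z}.
\end{align*}
The lift must satisfy the cell constraint \eqref{eq:aggregation_lift_marginal}, namely $\sum_{\omega\in\Omega_y}\mu^\nu(\omega)=\nu(y)$.

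Conversely, I would argue that every nonnegative lift with these cell masses is implementable under the fixed prior $\mu_0$, by the fixed-prior implementation step in that proof. That step uses $|\Y|\geq2$ only to perturb the residual signals so that their reports avoid $\nu$. If $|\Y|=1$, every report and every aggregate is the point mass, so the claims are trivial. Positivity of the history is also automatic: any $\omega$ with $\mu^\nu(\omega)>0$ serves as a true state.

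\emph{Case $T=1$.} Here $W_y=\sum_{\omega\in\Omega_y}\mu^\nu(\omega)=\nu(y)$ for every lift, so the aggregate is exactly $\nu$. This holds under any prior.

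\emph{Case $T\geq2$, fixed prior.} The cells decouple, so I would compute, for each $y$, the set of values of $W_y$ as the masses $m_\omega=\mu^\nu(\omega)\geq0$ range over $\{\sum_{\omega\in\Omega_y} m_\omega=\nu(y)\}$.
\begin{itemize}
\item If $\nu(y)=0$, then $W_y=0$.
\item If $\nu(y)>0$, then $W_y>0$, because some $m_\omega>0$. This already gives the support condition \eqref{eq:single_report_support}.
\item The map $m\mapsto\sum_\omega\mu_0(\omega)^{1-T}m_\omega^T$ is strictly convex. Its minimum over the cell simplex is attained at $m_\omega\propto\mu_0(\omega)$ (by Lagrange multipliers, or Hölder's inequality), giving $L_y=\nu(y)^T/\mu_0(\Omega_y)^{T-1}$.
\item Its maximum over the simplex is attained at a vertex, namely by putting all mass on the state with the smallest prior weight, giving $U_y$.
\item The simplex is connected and the map is continuous, so the range is exactly $[L_y,U_y]$.
\end{itemize}
Hence $\eta$ is $\mu_0$-Bayes-plausible if and only if there are $W_y\in[L_y,U_y]$ on $\supp\nu$ and $W_y=0$ off $\supp\nu$ with $W\propto\eta$. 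Writing $W_y=c\,\eta(y)$ gives \eqref{eq:single_report_fixed_prior}, and eliminating $c$ gives the interval-intersection form \eqref{eq:single_report_intersection}.

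\emph{Case $T\geq2$, prior free.} Necessity of \eqref{eq:single_report_support} follows from the fixed-prior case. For sufficiency, I would fix $\eta$ with $\supp\eta=\supp\nu$ and build a prior in two steps.
\begin{enumerate}
\item Fix any full-support prior with cell masses bounded below, say $\mu_0(\Omega_y)\geq\kappa>0$. Then each $L_y$ is bounded above by a constant independent of how mass is placed within cells. Choose $c$ large enough that $c\,\eta(y)\geq L_y$ for all $y\in\supp\nu$.
\item Then use $|\Omega_y|\geq2$ to shrink one state in each cell to prior mass $\delta$, keeping the cell masses fixed. This sends $U_y=\nu(y)^T\delta^{1-T}\to\infty$, so for small $\delta$ we also get $c\,\eta(y)\leq U_y$.
\end{enumerate}
The fixed-prior characterization then applies, completing the proof.

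The main obstacle I expect is the range computation for $W_y$. One must check that the extremes are exactly $L_y$ and $U_y$, and that every intermediate value is attained while the cell-mass constraint holds, so that the fixed-prior condition is necessary and sufficient rather than merely sufficient. A second point to verify carefully is that reusing the implementation step with a single observed report introduces no extra signal that also produces $\nu$.
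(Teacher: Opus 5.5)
Your proof is correct. For $T=1$ and for the fixed-prior characterization it follows the paper's argument:
\begin{itemize}
\item reduce the aggregate to a single full-state lift;
\item implement any lift under $\mu_0$;
\item decouple the cells;
\item obtain the range $[L_y,U_y]$ of each cell weight from H\"older's inequality (minimum at the prior-proportional split), convexity (maximum at the vertex on the least-likely state), and connectedness of the cell simplex.
\end{itemize}

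You depart from the paper only in the prior-free sufficiency for $T\ge2$. The paper does not pass through the interval criterion there. It fixes the point-mass lift $x(a_y)=\nu(y)$ and solves for the prior, setting $\mu_0(a_y)=\kappa\bigl(\nu(y)^T/\eta(y)\bigr)^{1/(T-1)}$ with $\kappa$ small and spreading the remaining mass over the other states. This gives $W_y=\kappa^{1-T}\eta(y)$ exactly.

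You instead hold the cell masses fixed, which pins down every $L_y$, and choose $c$ large enough that $c\,\eta(y)\ge L_y$. You then shrink one state per cell so that $U_y\ge\nu(y)^T\delta^{1-T}$ exceeds $c\,\eta(y)$. The fixed-prior criterion, through its intermediate-value step, then supplies a suitable lift.

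The two routes trade off as follows. The paper's is a one-line explicit construction of both prior and lift. Yours is more modular: it reuses the fixed-prior result and shows that the only fixed-prior obstruction is the bounded ratio $U_y/L_y$, which one small state per cell removes. As a by-product, it makes visible that the prior over $\Y$ can be held at any full-support value. Both routes use $|\Omega_y|\ge2$ for the same purpose: a state whose prior mass can be made small without changing the cell mass.
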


\subsection{Proof of \texorpdfstring{\autoref{lem:single_observed_report}}{the single-observed-report lemma}}
\label{app:proof:single_observed_report}
\begin{proof}
The case $|\Y|=1$ is immediate. Otherwise, the construction in \eqref{eq:aggregation_implementation} implements any full-state lift $x$ of $\nu$ under $\mu_0$. Choosing any state in $\supp x$ as the true state makes the repeated report occur with positive probability. By \eqref{eq:aggregation_lift_formula}, its unnormalized aggregate weights are
\begin{align}
W_y(x)=\sum_{\omega\in\Omega_y}\frac{x(\omega)^T}{\mu_0(\omega)^{T-1}},
\qquad
\sum_{\omega\in\Omega_y}x(\omega)=\nu(y).
\label{eq:single_report_cell_weight}
\end{align}
For $T=1$, these weights equal $\nu(y)$. For $T\geq2$, they are positive exactly on $\supp\nu$. Within each cell, H\"older's inequality gives the minimum $L_y$, attained at $x(\omega)=\nu(y)\mu_0(\omega)/\mu_0(\Omega_y)$. Convexity gives the maximum $U_y$, attained by concentrating mass on a state minimizing $\mu_0$. Continuity on the connected cell simplex gives every intermediate value. Since cells can be treated independently, their normalized weights equal $\eta$ exactly when \eqref{eq:single_report_fixed_prior} holds, equivalently \eqref{eq:single_report_intersection}.

For unrestricted sufficiency when $T\geq2$, assume \eqref{eq:single_report_support}. Choose $a_y\in\Omega_y$ for each $y\in\supp\nu$, set $x(a_y)=\nu(y)$ and $x=0$ elsewhere, and choose a full-support prior with
\begin{align*}
\mu_0(a_y)=\kappa\left(\frac{\nu(y)^T}{\eta(y)}\right)^{1/(T-1)}.
\end{align*}
Take $\kappa>0$ small enough that these masses sum to less than one and distribute the remainder strictly positively over all other states, which exist because $|\Omega_y|\geq2$. Then $W_y(x)=\kappa^{1-T}\eta(y)$, proving sufficiency.
\end{proof}

\subsection{Affine Log-Odds Aggregation}
\label{app:affine_log_odds}

Fix a history $h_T=(\nu_1,\dots,\nu_T)$ of reports about $Y$.
Every Bayes-plausible aggregate must be supported on $D:=\bigcap_{i=1}^T\supp\nu_i$. If $D=\varnothing$, no Bayes-plausible aggregate exists. Otherwise, fix a base value $y_0\in D$ and enumerate the remaining values in $D$ as $\{y_1,\dots,y_K\}$. Define the report log-odds vectors and their average by
\begin{align*}
l_i(k)
&:=\log\frac{\nu_i(y_k)}{\nu_i(y_0)},
&
\bar l
&:=\frac1T\sum_{i=1}^T l_i,
&m(k):=\log \frac{\nu(y_k)}{\nu(y_0)},
&& i=1,\dots,T,\quad k=1,\dots,K.
\end{align*}
For parameters $c\in\reals^{K}$ and $C\in\reals^{K\times K}$, affine log-odds aggregation sets
\begin{align}
m
=
c+C\bar l.
\label{eq:weighted_log_odds}
\end{align}
The log-odds vector $m$ uniquely determines the aggregate $\nu$ with support exactly $D$. Under the hypotheses of \autoref{prop:bayes_plausible_aggregation}, the resulting $\nu$ is $\mu_0$-Bayes-plausible for every full-support prior $\mu_0$, and therefore Bayes-plausible. In \eqref{eq:weighted_log_odds}, roughly, $c$ captures the prior knowledge and $C$ captures the learning patterns across $y\in\Y$. The choice $c=0$ and $C=I_K$ gives ordinary average log odds, the equal-weight logarithmic opinion pool. This pool is externally Bayesian: pooling commutes with updating each input distribution by the same likelihood \citep{Genest1984}. The choice $c=(1-T)l_0$ and $C=TI_K$ corresponds to the standard Bayesian aggregation rule with a known prior log odds $l_0$ and conditionally independent reports. More generally, affine log-odds aggregation is Bayes-optimal in the Gaussian intelligence model presented in \appref{app:gaussian_reasoning}. In its homogenous case, $C=\lambda I_K$ and $c=(1-\lambda)l_0$, where $\lambda\in[1,T]$ captures the correlation across reports.

\subsection{Gaussian Intelligence}
\label{app:gaussian_reasoning}

Conditional independence in the model is imposed given the full state $\omega$, not generally given the coarser value $y=Y(\omega)$. Repeated reports may therefore remain correlated conditional on $y$ because they share components of $\omega$ not revealed by $y$. The following Gaussian specification makes this dependence explicit. Related generative Bayesian aggregators for independent and exchangeable experts are developed by \citet{Kahn2004}.

Allow the full state to have a continuous component and signals to have continuous realizations. Let $J:=|\Y|$ and, without loss, $\Y=\{1,\ldots,J\}$; set $K:=J-1$. For any full-support posterior $\nu\in\Delta(\Y)$, write its log odds relative to $J$ as
\begin{align*}
l(k):=\log\frac{\nu(k)}{\nu(J)},
\qquad
k=1,\ldots,K,
\end{align*}
and let $l_0(k):=\log(\Pr(y=k)/\Pr(y=J))$ be the prior log odds. Let $e_1,\ldots,e_K$ be the standard basis of $\RR^K$ and set $e_J:=0$. Fix symmetric matrices $G,\Sigma\in\RR^{K\times K}$ with $G\succ0$ and $0\preceq\Sigma\preceq G$.

Let the full state be $(y,u)$, where $u\sim N(0,\Sigma)$ independently of $y$. Conditional on $(y,u)$, repetition $i$ produces
\begin{align}
s_i
=
l_0-\frac12\operatorname{diag}(G)+Ge_y+u+\varepsilon_i,
\qquad
i=1,\ldots,T,
\label{eq:canonical_gaussian_model}
\end{align}
where $\varepsilon_i\sim N(0,G-\Sigma)$ independently across $i$ and independently of $(y,u)$. Thus the signals are i.i.d. conditional on the full state and exchangeable conditional on $y$.

To verify calibration, let
\begin{align*}
m_J:=l_0-\frac12\operatorname{diag}(G),
\qquad
m_k:=m_J+Ge_k.
\end{align*}
Then $s_i\mid y=k\sim N(m_k,G)$. If $f_k$ denotes its density, then, for $k=1,\ldots,K$,
\begin{align*}
\log\frac{f_k(s)}{f_J(s)}
&=
(m_k-m_J)^\top G^{-1}
\left(s-\frac{m_k+m_J}{2}\right)
=
s(k)-l_0(k).
\end{align*}
Adding prior log odds gives
\begin{align*}
\log\frac{\Pr(y=k\mid s_i)}{\Pr(y=J\mid s_i)}
=s_i(k).
\end{align*}
Hence the signal itself is the posterior log-odds vector, so write $l_i:=s_i$.

For aggregation, define
\begin{align*}
\bar l_T:=\frac1T\sum_{i=1}^T l_i,
\qquad
V_T:=\Sigma+\frac{G-\Sigma}{T}.
\end{align*}
Since $V_T=G/T+(1-1/T)\Sigma\succ0$, this covariance is invertible even when $\Sigma$ or $G-\Sigma$ is singular. Conditional on $y=k$, $\bar l_T\sim N(m_k,V_T)$. Moreover,
\begin{align*}
\operatorname{Cov}(\bar l_T,l_i-\bar l_T\mid y)=0.
\end{align*}
Conditional on $y$, the average and the centered residuals are jointly Gaussian and therefore independent. The distribution of the residuals does not depend on $y$, so $\bar l_T$ is sufficient for $y$.

\begin{proposition}[Gaussian aggregation]
\label{prop:canonical_gaussian_aggregation}
In the Gaussian intelligence model, there exist a vector $c_T(l_0,\Sigma,G)\in\RR^K$ and a matrix $C_T(\Sigma,G)\in\RR^{K\times K}$ such that the posterior log odds after observing $l_1,\ldots,l_T$ are
\begin{align}
l^T
=
c_T(l_0,\Sigma,G)
+
C_T(\Sigma,G)
\frac1T\sum_{i=1}^T l_i.
\label{eq:canonical_gaussian_aggregation}
\end{align}
\end{proposition}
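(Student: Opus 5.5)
The plan is a direct Gaussian computation built on the sufficiency of $\bar l_T$ recorded just before the statement. Because the residuals $l_i-\bar l_T$ are independent of $\bar l_T$ given $y$, and their law does not depend on $y$, the likelihood of $(l_1,\ldots,l_T)$ given $y=k$ factors. One factor is the $N(m_k,V_T)$ density of $\bar l_T$; the other is a term common to all $k$. Hence the posterior log odds of $k$ against $J$ equal the prior log odds $l_0(k)$ plus the log ratio of the two Gaussian densities of $\bar l_T$ under $y=k$ and $y=J$. Since the two densities share the covariance $V_T$, the quadratic terms cancel and this log ratio is affine in $\bar l_T$:
\begin{align*}
\log\frac{\phi_k(\bar l_T)}{\phi_J(\bar l_T)}
=(m_k-m_J)^\top V_T^{-1}\Bigl(\bar l_T-\frac{m_k+m_J}{2}\Bigr).
\end{align*}
This is the same calculation used earlier for calibration of a single signal, with $G$ replaced by $V_T$.

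Next I would substitute $m_k-m_J=Ge_k$ and stack the $K$ coordinates. The coefficient matrix is $C_T=GV_T^{-1}$, since row $k$ is $e_k^\top G V_T^{-1}$ and $G$ is symmetric. It depends only on $(\Sigma,G)$, and $V_T\succ0$ guarantees the inverse exists.

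The intercept collects the remaining terms. Coordinate $k$ is
\begin{align*}
l_0(k)-e_k^\top G V_T^{-1}\frac{m_k+m_J}{2},
\qquad
\frac{m_k+m_J}{2}=l_0-\tfrac12\operatorname{diag}(G)+\tfrac12 Ge_k .
\end{align*}
This depends only on $l_0,\Sigma,G$, which defines $c_T(l_0,\Sigma,G)$. As a check, setting $T=1$ gives $V_1=G$, so $C_1=I$ and the intercept reduces to $l_0(k)-\bigl(l_0(k)-\tfrac12G_{kk}+\tfrac12G_{kk}\bigr)=0$. This recovers $l^1=l_1$.

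The main point needing care is the sufficiency and factorization step, because $\Sigma$ or $G-\Sigma$ may be singular, so the joint law of $(l_1,\ldots,l_T)$ given $y$ need not have a Lebesgue density. I would handle this by working with the invertible linear map $(l_1,\ldots,l_T)\mapsto(\bar l_T,l_2-\bar l_T,\ldots,l_T-\bar l_T)$. Conditional independence of the two blocks, together with a residual law free of $y$, shows that the likelihood ratio between any two values of $y$ is the ratio of the $\bar l_T$ densities. This holds as a Radon--Nikodym derivative with respect to the common residual law times Lebesgue measure on $\RR^K$, where $\bar l_T$ has a genuine density because $V_T\succ0$. Bayes' rule in ratio form then yields the formula in the statement.
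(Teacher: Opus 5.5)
Your proposal is correct and follows the paper's proof. Both use the sufficiency of $\bar l_T$, take the log ratio of the $N(m_k,V_T)$ and $N(m_J,V_T)$ densities, substitute $m_k-m_J=Ge_k$, and read off $C_T=GV_T^{-1}$ together with the same intercept $c_T$. Your Radon--Nikodym treatment of the factorization and your $T=1$ check are additions the paper leaves implicit. One small correction: only a singular $G-\Sigma$, not a singular $\Sigma$, can make the joint law of the reports degenerate, but your argument covers every case anyway.
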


\begin{proof}
Let $g_k$ be the conditional density of $\bar l_T$ given $y=k$. Since $m_k-m_J=Ge_k$,
\begin{align*}
\log\frac{g_k(\bar l_T)}{g_J(\bar l_T)}
&=
(m_k-m_J)^\top V_T^{-1}
\left(\bar l_T-\frac{m_k+m_J}{2}\right)\\
&=
e_k^\top GV_T^{-1}
\left(\bar l_T-l_0+\frac12\operatorname{diag}(G)\right)
-
\frac12e_k^\top GV_T^{-1}Ge_k.
\end{align*}
Adding the prior log odds and stacking the coordinates gives
\begin{align*}
l^T
=
l_0
+
GV_T^{-1}
\left(\bar l_T-l_0+\frac12\operatorname{diag}(G)\right)
-
\frac12\operatorname{diag}(GV_T^{-1}G).
\end{align*}
Thus \eqref{eq:canonical_gaussian_aggregation} holds with
\begin{align*}
C_T(\Sigma,G)
&:=GV_T^{-1},\\
c_T(l_0,\Sigma,G)
&:=l_0-GV_T^{-1}l_0
+\frac12GV_T^{-1}\operatorname{diag}(G)
-\frac12\operatorname{diag}(GV_T^{-1}G).
\end{align*}
\end{proof}

\begin{corollary}[Homogenous Gaussian aggregation]
\label{cor:scalar_gaussian_aggregation}
Suppose that $\Sigma=\rho G$ for some $\rho\in[0,1]$. Then
\begin{align*}
l^T=(1-\lambda_T)l_0+\lambda_T\bar l_T,
\qquad
\lambda_T:=\frac{T}{1+(T-1)\rho}.
\end{align*}
Conversely, if $T\geq2$, then $C_T(\Sigma,G)$ is a scalar multiple of the identity if and only if $\Sigma$ is proportional to $G$.
\end{corollary}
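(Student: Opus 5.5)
The plan is to specialize the formulas for $C_T(\Sigma,G)=GV_T^{-1}$ and $c_T(l_0,\Sigma,G)$ from the proof of \autoref{prop:canonical_gaussian_aggregation}, using the explicit form $V_T=G/T+(1-1/T)\Sigma$. Everything reduces to linear algebra with the invertible matrix $G\succ0$.

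\emph{Forward direction.} If $\Sigma=\rho G$, then
\begin{align*}
V_T=\Bigl(\tfrac1T+\bigl(1-\tfrac1T\bigr)\rho\Bigr)G=\frac{1+(T-1)\rho}{T}\,G .
\end{align*}
The scalar $1+(T-1)\rho$ is positive for $\rho\in[0,1]$, so $V_T^{-1}=\lambda_T G^{-1}$. Hence $C_T=GV_T^{-1}=\lambda_T I_K$. Substituting into the expression for $c_T$ gives
\begin{align*}
c_T=l_0-\lambda_T l_0+\tfrac12\lambda_T\operatorname{diag}(G)-\tfrac12\operatorname{diag}(\lambda_T G)=(1-\lambda_T)l_0 .
\end{align*}
The two $\operatorname{diag}$ terms cancel because $\operatorname{diag}$ is linear. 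Plugging $c_T$ and $C_T$ into \eqref{eq:canonical_gaussian_aggregation} yields $l^T=(1-\lambda_T)l_0+\lambda_T\bar l_T$.

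\emph{Converse.} Suppose $T\geq2$ and $C_T=GV_T^{-1}=\lambda I_K$ for some scalar $\lambda$. Since $G$ and $V_T$ are invertible, $\lambda\neq0$, and the identity is equivalent to $G=\lambda V_T=\lambda\bigl(G/T+(1-1/T)\Sigma\bigr)$. Rearranging gives
\begin{align*}
\lambda\Bigl(1-\tfrac1T\Bigr)\Sigma=\Bigl(1-\tfrac{\lambda}{T}\Bigr)G .
\end{align*}
Because $T\geq2$ and $\lambda\neq0$, the coefficient on the left is nonzero. Dividing by it shows that $\Sigma$ is a scalar multiple of $G$, which is the claim. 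The hypothesis $T\geq2$ is exactly what allows this division; for $T=1$ one has $V_1=G$ and $C_1=I_K$ for every $\Sigma$. Conversely, proportionality gives a scalar $C_T$ by the forward computation. The standing constraint $0\preceq\Sigma\preceq G$ then forces the constant of proportionality into $[0,1]$, so it matches the $\rho$ of the first part.

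The only step needing care is the converse: one must note that $\lambda\neq0$ and that $T\geq2$ makes $1-1/T\neq0$, so the division is legitimate. The rest is direct substitution.
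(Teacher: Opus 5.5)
Your proof is correct and follows the paper's argument essentially step for step. Both arguments specialize $V_T$ to $\frac{1+(T-1)\rho}{T}G$ to get $C_T=\lambda_T I_K$ and $c_T=(1-\lambda_T)l_0$, then prove the converse by rewriting $GV_T^{-1}=\lambda I_K$ as $G=\lambda V_T$ and using $TV_T=G+(T-1)\Sigma$ with $T\geq2$; your side remarks are correct details the paper leaves implicit, namely that $0\preceq\Sigma\preceq G$ forces the proportionality constant into $[0,1]$ and that $C_1=I_K$ for every $\Sigma$.
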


\begin{proof}
If $\Sigma=\rho G$, then
\begin{align*}
V_T
=
\frac{1+(T-1)\rho}{T}G,
\end{align*}
so $C_T=GV_T^{-1}=\lambda_T I_K$. Substitution into the expression for $c_T$ gives $c_T=(1-\lambda_T)l_0$.

Conversely, suppose $T\geq2$ and $GV_T^{-1}=\lambda I_K$. Since $G$ and $V_T$ are positive definite, $\lambda>0$ and $V_T=\lambda^{-1}G$. Using $TV_T=G+(T-1)\Sigma$ shows that $\Sigma$ is proportional to $G$.
\end{proof}

If $\Sigma=0$, the formula reduces to $l^T=T\bar l_T-(T-1)l_0$, the expression for conditionally independent reports. If $\Sigma=G$, all reports coincide and $l^T=\bar l_T=l_1$.

\section{Running Example}\label{app:running_example}

Throughout this example, $r$ denotes the reported probability of $Y=1$, and the two report distributions at the realized state are
\begin{align*}
\tau_1=\tfrac12\delta_{2/5}+\tfrac12\delta_{3/5},
\qquad
\tau_2=\tfrac12\delta_{1/10}+\tfrac12\delta_{9/10}.
\end{align*}

\subsection{Opposite Intelligence Orders}\label{app:running_example_order}

On a two-state space, neither Blackwell order is possible for these distributions, for any common full-support prior. At realized state $1$, \autoref{lem:blackwell_singleton} implies that $E_2\succeq_B E_1$ requires
\begin{align*}
\EE_{\tau_2}\left[\frac{1-r}{r}\right]
\leq
\EE_{\tau_1}\left[\frac{1-r}{r}\right],
\end{align*}
but the two sides are $41/9$ and $13/12$. Conversely, every ex ante possible full posterior under an experiment inducing $\tau_1$ at state $1$ lies in $\{0,2/5,3/5\}$: any signal absent at state $1$ has posterior $0$. A garbling cannot produce the report $9/10$, ruling out $E_1\succeq_B E_2$. Symmetry gives both impossibilities when the realized state is $0$.

Now let $\Omega=\{0,1,2\}$, with $Y(0)=0$ and $Y(1)=Y(2)=1$, realized state $1$, and common full-state prior
\begin{align*}
\mu_0=\tfrac1{120}(100,1,19).
\end{align*}
With rows ordered as $(0,1,2)$, consider the conditional signal matrices
\begin{align}
\pi_2=\begin{pmatrix}
9/200&1/200&3/20&4/5\\
1/2&1/2&0&0\\
0&4/19&15/19&0
\end{pmatrix},
\qquad
\pi_1=\begin{pmatrix}
3/25&2/25&4/5\\
1/2&1/2&0\\
15/38&23/38&0
\end{pmatrix}.
\label{eq:running_forward_order}
\end{align}
The experiment $E_2$ from \eqref{eq:running_three_state_experiment} induces reports $(1/10,9/10,1/2,0)$. Send its first two signals to separate destinations, split its third signal equally between those destinations, and send its fourth signal to a residual destination. This garbling gives $E_1$, whose reports are $(2/5,3/5,0)$. At state $1$, each experiment generates its first two reports with probability $1/2$, giving exactly the plotted distributions. The prior over $Y$ remains $(5/6,1/6)$. The order is strict: the ex ante squared-error Bayes risks for predicting $Y$ are $7/100$ under $E_2$ and $8/100$ under $E_1$.

The reverse strict order is possible at the same full-state prior and realized state. Consider experiments $\widehat E_1,\widehat E_2$ with conditional signal matrices
\begin{align}
\widehat\pi_1=\begin{pmatrix}
3/100&1/50&19/20&0\\
1/2&1/2&0&0\\
3/38&5/38&0&15/19
\end{pmatrix},
\qquad
\widehat\pi_2=\begin{pmatrix}
9/50&1/50&4/5\\
1/2&1/2&0\\
3/38&35/38&0
\end{pmatrix}.
\label{eq:running_reverse_order}
\end{align}
The reports under $\widehat E_1$ are $(2/5,3/5,0,1)$. Send its first two signals to separate destinations; send its third signal to the first destination with probability $3/19$ and to a residual destination otherwise; send its fourth signal to the second destination. The resulting experiment is $\widehat E_2$, with reports $(1/10,9/10,0)$. The realized report distributions are again $(\tau_1,\tau_2)$, while squared-error Bayes risks are now $2/100$ for $\widehat E_1$ and $3/100$ for $\widehat E_2$.

For the endpoint comparison, keep the same three-state space and allow the common prior and experiments to vary. Replacing $\tau_2$ by $\tfrac12\delta_{1/10}+\tfrac12\delta_1$ leaves $Y=1$ as the common possible answer, with two states in its cell. By the proof of \autoref{cor:blackwell_binary}, this distribution plausibly dominates $\tau_1$, whereas the reverse order is impossible. Replacing its other atom by $0$ leaves no common possible answer and violates \autoref{thm:general_full_belief}.

\subsection{Aggregation with Arbitrarily Many Reports}\label{app:running_example_aggregation}

To study unlimited repetition, we construct two fixed experiments with the same prior and observed report distribution but opposite limiting aggregates.

Let $\Omega=\{0,1,2,3\}$, with $Y(0)=Y(1)=0$ and $Y(2)=Y(3)=1$, and fix
\begin{align*}
\mu_0=\tfrac1{120}(1,99,1,19).
\end{align*}
Consider experiments $E^+,E^-$ with conditional signal matrices (rows ordered as $(0,1,2,3)$)
\begin{align}
\pi^+=\begin{pmatrix}
0&0&1\\
1/22&1/198&94/99\\
1/2&1/2&0\\
0&4/19&15/19
\end{pmatrix},
\qquad
\pi^-=\begin{pmatrix}
1/2&1/2&0\\
4/99&0&95/99\\
0&0&1\\
1/38&9/38&14/19
\end{pmatrix}.
\label{eq:running_aggregation_models}
\end{align}
At the stated prior, both experiments have ex ante signal probabilities $(1/24,1/24,11/12)$ and induce reports $(1/10,9/10,3/22)$. The realized state is $2$ under $E^+$ and $0$ under $E^-$. Each generates the entire report distribution $\tau_2$.

Let $n_L,n_H\geq1$ count the reports $1/10,9/10$, with $T=n_L+n_H$. The two aggregates are
\begin{align}
\eta_T^+
&=\frac1{1+99\,11^{-n_L}99^{-n_H}},
&
\eta_T^-
&=\frac{9^{n_H}}{19^{T-1}+9^{n_H}}.
\label{eq:running_finite_aggregates}
\end{align}
Under either realized distribution, both reports eventually appear almost surely. For every subsequent $T$,
\begin{align*}
0\leq1-\eta_T^+&\leq99\,11^{-T}\longrightarrow0,
&
0\leq\eta_T^-&\leq19(9/19)^T\longrightarrow0.
\end{align*}
Thus the Bayesian aggregate converges almost surely to $1$ under $E^+$ and to $0$ under $E^-$. The experiments are fixed for all $T$; their realized states differ.

Both aggregation models also reproduce $\tau_1$ for a second agent using the common garbling
\begin{align*}
G=\begin{pmatrix}
5/8&3/8&0\\
3/8&5/8&0\\
0&0&1
\end{pmatrix}.
\end{align*}
Its reports are $(2/5,3/5,3/22)$, with equal probabilities on the first two at the respective realized states. Thus both plotted report distributions, the common prior, and the ex ante report probabilities agree across the two aggregation models.

\end{document}